\pgfplotsset{compat=1.15}
\newcommand{\R}{\mathcal{R}}
\newcommand{\N}{\mathcal{N}}
\newcommand{\W}{\mathcal{W}}
\newcommand{\X}{\mathcal{X}}
\newcommand{\A}{\mathcal{A}}
\newcommand{\G}{\mathcal{G}}
\newcommand{\F}{\mathcal{F}}
\newcommand{\bbR}{\mathbb{R}}
\newcommand{\one}{\textbf{1}}
\newcommand{\norm}[1]{\left\lVert #1 \right\rVert}
\newcommand{\abs}[1]{\left| #1 \right|}
\newcommand{\mso}{\{\!\!\{}
\newcommand{\msc}{\}\!\!\}}
\newcommand{\scalprod}[2]{\langle #1 , #2 \rangle}
\newtheorem{definition}{Definition}
\newtheorem{theorem}{Theorem}
\newtheorem{proposition}{Proposition}
\newtheorem{lemma}{Lemma}
\newtheorem{corollary}{Corollary}
\title{Graph Filtration Kernels}
\author{
	Till H.~Schulz\textsuperscript{\rm 1}, Pascal Welke\textsuperscript{\rm 1}, Stefan Wrobel\textsuperscript{\rm 1,2}
}
\begin{document}

	\maketitle
	
	\begin{abstract}

The majority of popular graph kernels is based on the concept of Haussler's $\mathcal{R}$-convolution kernel and defines graph similarities in terms of mutual substructures. 
In this work, we enrich these similarity measures by considering graph filtrations:
Using meaningful orders on the set of edges, which allow to construct a sequence of nested graphs, we can consider a graph at \emph{multiple granularities}. 
For one thing, this provides access to features on different levels of resolution. 
Furthermore, rather than to simply compare frequencies of features in  graphs, it allows for their comparison in terms of \emph{when} and for \emph{how long} they exist in the sequences. 
In this work, we propose a family of graph kernels that incorporate these existence intervals of features. 
While our approach can be applied to arbitrary graph features, we particularly highlight Weisfeiler-Lehman vertex labels, leading to efficient kernels. 
We show that using Weisfeiler-Lehman labels over certain filtrations strictly increases the expressive power over the ordinary Weisfeiler-Lehman procedure in terms of deciding graph isomorphism.
In fact, this result directly yields more powerful graph kernels based on such features and has implications to graph neural networks due to their close relationship to the Weisfeiler-Lehman method.
We empirically validate the expressive power of our graph kernels and show significant improvements over state-of-the-art graph kernels in terms of predictive performance on various real-world benchmark datasets. 
	\end{abstract}


\section{Introduction}
    Graph-structured data is prevalent in countless domains such as social networks, biological interaction graphs and molecules. 
	A central task on this kind of data is the classification of graphs. 
	Perhaps the most established machine learning methods for graph classification are graph kernels which, even in the advent of neural network approaches, remain highly relevant due to their remarkable predictive performance \citep{DBLP:journals/ans/KriegeJM20}. 
	The majority of graph kernels are instances of Haussler's $\R$-convolution kernel \citep{Haussler99} which define graph similarity in terms of pairwise similarities between the graphs' substructures. 
	Some well-known representatives are the Weisfeiler-Lehman graph kernels \citep{DBLP:journals/jmlr/ShervashidzeSLMB11}, the shortest-path kernel \citep{DBLP:conf/icdm/BorgwardtK05} and the cyclic pattern kernel \citep{DBLP:conf/kdd/HorvathGW04}.
	Generally, such kernels revert to simply comparing substructures in terms of equivalence. We refer to these kernels as \emph{histogram kernels}. 
	While they prove to be successful on many classification tasks, this notion of equivalence is often too rigid in terms of similarity functions.
	
	Motivated by this limitation, we introduce a family of graph kernels which regard a graph at multiple levels of resolution.
	This is realized using the concept of graph filtrations, which define sequences of nested subgraphs that differ only in the sets of edges. 
	Such a sequence can be viewed as incremental refinements that construct a graph by gradually adding sets of edges. 
	An example depicting this concept is found in Fig. \ref{fig:pipeline}.
	Clearly, with changing sets of edges, the graph features change as well. 
	That is, features occurring at some point in the sequence may disappear at a later moment.  
	In this work, we track such feature existence intervals which allows for a comparison not only in terms of feature frequency but also by \emph{when} and for \emph{how long} they exist. 
	This comparison of feature occurrence distributions is realized using the Wasserstein distance which, using recent results, allows for proper kernel functions on this kind of information.     
	A benefit of this approach is the kernels' ability to handle continuous edge attributes. 

	Our main contributions are summarized as follows:
	\begin{enumerate}
		\item We introduce a general graph kernel framework which defines similarity by comparing graph feature occurrence distributions over sequences of graph resolutions and show that such kernels generalize histogram kernels.
		\item We particularly consider the well-known Weisfeiler-Lehman subtree features and show that there exist filtration kernels using such features which \emph{strictly increase the expressive power} over the ordinary Weisfeiler-Lehman subtree kernel. 
		\item We empirically validate our theoretical findings on the expressive power of our kernels and furthermore provide experiments on real-world benchmark datasets which show a favorable performance of our approach compared to state-of-the-art graph kernels. 
		
	\end{enumerate}
	

 \section{Background}
 \label{sec:background}
	 	
	 \paragraph{Graph kernels.}
		\emph{Kernels} are functions of the form $k: \X \times \X \rightarrow \bbR$ which define similarity measures on non-empty sets $\X$.
		More precisely, $k$ is a kernel if there exists a mapping $\varphi: \X \rightarrow \mathcal{H}_k$ with $k(x,y) = \langle \varphi(x), \varphi(y) \rangle$ where $\langle \cdot, \cdot \rangle$ is the inner product in the associated Hilbert space $\mathcal{H}_k$. 
		With Haussler's work on \emph{convolution kernels} over discrete structures \citep{Haussler99}, kernel methods became widely applicable on graphs. 
		The concept of $\R$-convolution kernels provides a general framework which can be used to construct graph kernels by defining graph similarity in terms of their aggregated substructure similarities. 
		More formally, let there be a function decomposing the graph $G$ into the set of its substructures $\X_G$. 
		Then, for graphs $G,G'$, the (simplified) $\R$-convolution kernel is defined by 
		\begin{equation}
			k(G,G') = \sum_{(x,y) \in \X_G \times \X_{G'}} \kappa(x,y)
		\end{equation}
		where $\kappa(x,y)$ is a kernel on the substructures. 
		To guarantee convergence, we assume $\X_G$ to be finite. 
		As this decomposition can be perceived as a feature extraction process, we will commonly refer to such substructures as \emph{features}. 
		In a majority of graph kernels the function $\kappa$ is simply the Dirac delta which amounts to $1$ if the features $x$ and $y$ are equivalent and $0$ otherwise.\footnote{
		    We note that kernels generally allow an individual weighting of features. 
		    However for reasons of simplicity, we omit this aspect in the further discussions.
		}
		Thus, $k(G,G')$ essentially measures graph similarity by counting pairs of equivalent features.
		Some well-known examples are the Weisfeiler-Lehman subtree kernel \citep{DBLP:journals/jmlr/ShervashidzeSLMB11} and cyclic pattern kernels \citep{DBLP:conf/kdd/HorvathGW04}.
		Such kernels can alternatively be described by the inner product of explicit feature vectors
		\begin{equation}
		    \label{eq:feature_vector}
			\varphi(G) = [c(f_1(G)), c(f_2(G)), \ldots]
		\end{equation}
		where $c(f_i(G))$ indicates the counts of features $f_i \in \F$ in graph $G$ over some fixed feature domain $\F$. 
		We refer to this kind of kernels as \emph{histogram} kernels.  
		Note that histogram kernels can equivalently be expressed as a sum of feature frequency products, i.e., 
		\begin{equation}
		    \label{eq:hist_kernel}
		    K^\F_H(G,G) = \sum_{f \in \F} c(f(G)) ~ c(f(G')) ~~.
		\end{equation}
		
	\paragraph{Wasserstein distance.}
	 	The Wasserstein distance is a distance function between probability distributions based on the concept of optimal mass transportation.
	 	In the following, we specifically consider the $1$-Wasserstein distance for discrete distributions, i.e., histograms, and refer to it as the Wasserstein distance. 
		For more general definitions see e.g. \citet{compOT}.
	 	Intuitively, the Wasserstein distance can be viewed as the minimum cost necessary to transform one pile of earth into another.
	 	It is, therefore, also known as the \emph{earth movers} distance or \emph{optimal transportation} distance.  
	 	More formally, given two vectors $x \in \bbR^n$ and $x' \in \mathbb{R}^{n'}$ with $\norm{x}_1=\norm{x'}_1$ and a cost matrix $C_d^{n \times n'}$ containing pairwise distances between entries of $x$ and $x'$, the \emph{Wasserstein distance}
	 	is defined by 
	 	\begin{equation}\label{eq:wasserstein}
	 		\W_d(x,x') = \min_{T \in \mathcal{T}(x,x')} \langle T,C_d \rangle
	 	\end{equation} 
	 	with $\mathcal{T}(x,x') \subseteq \mathbb{R}^{n \times n'}$ and $T\one_{n'}=x$, $\one_n^\top T=x'$ for all $T \in \mathcal{T}(x,x')$, where $\langle \cdot,\cdot \rangle$ is the Frobenius inner product.
	 	The function $d$ defining the costs in $C_d$ between entries of $x$ and $x'$ is called \emph{ground distance}.
	 	If the ground distance is a metric, then the Wasserstein distance is a metric \citep[][Sec. 2.4]{compOT}.
	

\section{From Filtrations to Distances}
	We now outline the concept of tracking graph features over sequences of different resolutions and define distance measures between graphs using this kind of information. 
	
	\subsection{Feature Persistence}
		The general idea of \emph{feature persistence} in graphs is derived from persistent homology, which refers to a method in computational topology that aims at measuring topological features at various levels of resolution \citep{DBLP:books/daglib/0025666}. 
		Persistent homology is applied in a wide range of topological data analysis tasks and has recently become a popular tool for analyzing topological properties in graphs \citep{DBLP:journals/ans/AktasAF19pershomsurvey}. 
		In this work, we adopt some of its basic concepts and specifically fit them to describe the idea of feature persistence. 
		
		Intuitively speaking, feature persistence tracks the lifespans of graph features in evolving graphs. 
		That is, it records the intervals during which occurrences of a specific feature appear in sequentially constructed graphs.
		Such a graph sequence is defined by a graph filtration which is essentially an ordered graph refinement that constructs a graph by gradually adding sets of edges. 
		More precisely, given a graph $G=(V,E)$, a \emph{graph filtration} $\A(G)$ is a sequence of graphs 
		\begin{equation} 
			\label{eq:filtration} 
			G_1 \subseteq G_2 \subseteq \ldots \subseteq G_k = G \ ,
		\end{equation}
		where $\subseteq$ denotes the subgraph relation and $G_i=(V,E_i \subseteq E)$ is called a \emph{filtration graph} of $G$. 
		Hence, filtration graphs differ only in the sets of edges and describe a sequence in which the last element is the graph $G$ itself. 
		Without loss of generality, we assume $G$ to be edge-weighted by a function $w: E(G) \to \mathbb{R}_+$ such that the filtration $\A(G)$ is implicated by a sequence of decreasing values
		\begin{equation} 
			\label{eq:weights} 
			\alpha_1 \geq \alpha_2 \geq \ldots \geq \alpha_k = 0 \ ,
		\end{equation}
		where a filtration graph $G_i$ is induced by the set of edges with weights greater or equal $\alpha_i$, i.e., $G_i = (V,\{e \in E:~ w(e) \geq \alpha_i\})$. 
		Thus, $\A$ is determined by $\{ \alpha_1, \ldots, \alpha_k \}$ 
		and isomorphic graphs $G,G'$ (considering edge weights) generate equivalent filtrations $\A(G) \equiv \A(G')$.
 
		While traditional persistent homology tracks lifespans of topological features such as connected components and cycles, our notion of feature persistence is concerned with arbitrary graph features. 
		More precisely, given some feature of interest $f$ (e.g., Weisfeiler-Lehman vertex colors, c.f. Fig. \ref{fig:pipeline}) and a graph filtration $\A(G)$, feature persistence describes the set of occurrence intervals of $f$ over the sequence $\A(G)$.
		This concept can very intuitively be depicted using (discrete) persistence barcodes shown in Fig.~\ref{fig:pipeline}.
		Each bar in the barcode diagram corresponds to the lifespan of a feature occurrence. 
		Furthermore, considering a graph at different levels of resolutions provides access to features not necessarily present in the graph $G$ itself. 
		\begin{figure*}[t]
			\centering
			\resizebox{1\textwidth}{!}{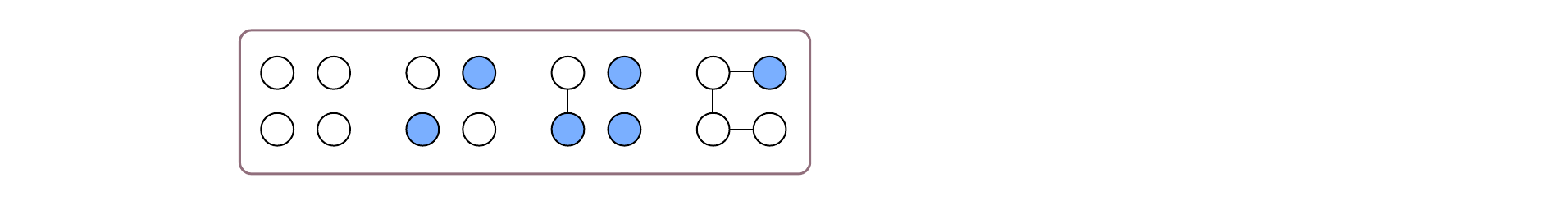}
			\caption{Consider the simple Weisfeiler-Lehman label $f$ corresponding to a vertex having exactly one neighbor. Each occurrence of this feature is individually marked in the filtration graphs $\A(G)$ shown in (b). The barcode in (c) depicts the existence intervals of each such feature occurrence. This information is then aggregated into a filtration histogram $\phi^\A_f(G)$ in (d). }
			\label{fig:pipeline}
		\end{figure*}
	
	\subsection{Wasserstein Distance on Filtration Histograms}
		The majority of traditional graph kernels defines the similarity measure in terms of the number of mutual substructures. 
		Often this comes down to simply comparing frequencies of features. 
		Using the concept of feature persistence, we are able to define much finer similarity measures on graphs. 
		We achieve this by defining a distance function on histograms which aggregate lifespans of feature occurrences.
		This distance measure compares graphs in terms of \emph{when} and \emph{for how long} a certain feature appears in the filtration instead of merely considering how often it occurs. 
		The underlying intuition is that features occurring close to each other in the filtration sequence indicate a higher similarity than those lying farther apart. 
		A natural choice for this distance function is the Wasserstein distance as the aggregated feature occurrence lifespans directly translate into $1$-dimensional distributions (see Fig.~\ref{fig:pipeline}).
		
		In order to define a distance measure w.r.t. \emph{a single} feature $f$ on graphs $G,G'$, we aggregate the feature persistence information of $G$ and $G'$  into a single histogram.
		This process is visualized in Fig.~\ref{fig:pipeline}. 
		Such a histogram essentially accumulates all feature lifespans of a particular feature and hence reflects the number of feature occurrences in each filtration graph.\footnote{
		    Note that aggregating all feature occurrence intervals into a single histogram clearly loses information on the distribution of the individual lifespans.  
		    However, while a pairwise comparison of persistence intervals has been shown to lead to valid kernels in the context of persistent homology \citep{DBLP:conf/cvpr/ReininghausHBK15}, our approach relies on a single histogram representation, which we show leads to very powerful kernel functions, nevertheless.
		}
		\begin{definition}[Filtration Histogram]
		    \label{def:filtration_histogram}
			Given a graph $G$ together with a length-$k$ filtration $\A(G)$ and a feature $f$, the function $\phi^\A_f: G \rightarrow \bbR^k$ maps $G$ to its \emph{filtration histogram} which counts occurrences of $f$ in each filtration graph of $G$.
		\end{definition}
		
		In the remainder of this work, we often omit the filtration function $\A$ in the notations if it is either irrelevant or clear from the context.
		
		Using filtration histograms allows for the application of natural distance measures such as the Wasserstein distance. 
		Intuitively speaking, the Wasserstein distance between such histograms describes the cost of shifting (accumulated) feature lifespans into another. 
		\begin{definition}[Filtration Histogram Distance]
			Given graphs $G,G'$, a filtration histogram mapping $\phi^\A_f: \G \rightarrow \bbR^k$ together with a distance function $d: \A_\alpha \times \A_\alpha \rightarrow \bbR$ on associated values $\A_\alpha = \{ \alpha_1,\ldots,\alpha_k \}$, the \emph{filtration histogram distance} is given by 
			\begin{equation}
				\label{eq:histogram_distance}
				\W_d(\phi^\A_f(G),\phi^\A_f(G')) ~~.
			\end{equation}
		\end{definition}
		The ground distance $d$ of the Wasserstein distance defines how feature occurrences at different points in the filtrations are being compared to each other.
		Since values in $\A_\alpha$ can be viewed as points on the timeline $[\alpha_1,\alpha_k]$, a natural choice for this distance is the simple euclidean metric 
		\begin{equation}
			\label{eq:grounddistance_1dim}
			d^1(\alpha_i,\alpha_j) = |\alpha_i-\alpha_j|
		\end{equation}
		on elements $\alpha_i,\alpha_j \in \A_\alpha$. 
		Furthermore, while the Wasserstein distance has cubic time complexity in the length of filtrations in general, this reduces to a \emph{linear} time complexity when employing $d^1$ on the real line as ground distance \citep[][Rem. 2.30]{compOT}. %
	

\section{Graph Filtration Kernels}
    \label{sec:filtration_kernels}
	Recent results in optimal transport theory give rise to proper kernel functions using the above filtration histogram distances when equipped with a suitable ground distance function \citep{DBLP:conf/nips/LeYFC19}. 
	In fact, it can be shown that utilizing the euclidean ground distance $d^1$ yields positive semi-definite kernels. 
	These kernels serve as building blocks for our final filtration graph kernels. 
	More precisely, we construct graph kernels by combining multiple base kernels $k_f$ over a set of features $f \in \F$.
	Each such base kernel is concerned with a single feature and defines a similarity between graphs $G$ and $G'$ w.r.t. this particular feature. 
	While the number of features may potentially be infinite, we can show that it suffices to consider only such features which appear in \emph{both} graphs. 
	
	Since filtration histograms (Def. \ref{def:filtration_histogram}) are not necessarily of equivalent mass, a normalization is necessary to fit the requirement of general Wasserstein distances. 
    We denote such mass-normalized filtration histograms using function $\hat{\phi}_f$.
	The base kernel on graphs $G,G'$ w.r.t. feature $f$ (and parameter $\gamma \in \mathbb{R}_+$) is then defined over normalized histograms as follows
	\begin{equation}
		k^\A_f(G,G') = e^{-\gamma \W_{d^1}(\hat{\phi}^\A_f(G),\hat{\phi}^\A_f(G'))} ~~.
	\end{equation}
	

	    Filtration kernels can be constructed in form of linear combinations of base kernels.
		That is, they are sums of kernels $k_f$ over features $f \in \F$. 
		Note that the mass-normalization of filtration histograms results in a loss of information on the number of occurrences of features.
	    However, this frequency information is often quite crucial. 
	    By introducing weights corresponding to the original masses of histograms, this information loss can be in part reverted. 
	    That is, we weigh each base kernel $k_f(G,G')$ using the original histogram masses of $\phi_f(G)$ and $\phi_f(G')$, i.e., $\norm{\phi_f(G)}_1$ resp. $\norm{\phi_f(G')}_1$.
		\begin{definition}[Filtration Kernel]
			Given graphs $G,G'$, a filtration function $\A$, and a set of features $\F$ the \emph{filtration kernel} is given by 
			\begin{equation}
			    \label{eq:filtration_kernel}
				K^{\F,\A}_{\text{Filt}}(G,G')=\sum_{f \in \F} k^\A_f(G,G') ~ \norm{\phi^\A_f(G)}_1 \norm{\phi^\A_f(G')}_1
			\end{equation}
		\end{definition}
		Notice the special case that a feature $f$ does not appear in the filtration sequence of $G$.
		Hence, the corresponding histogram has zero mass and the filtration histogram distance to some non-zero mass histogram is not properly defined. 
		This issue can formally be solved by introducing a dummy histogram entry as discussed in App. A. 
		We note that since the kernel $k_f(G,G')$ is multiplied by the histogram mass $\norm{\phi_f(G)}_1$, which in this particular case amounts to $0$, this is only a formal issue and becomes non-relevant in the kernel computation.
		From a similar argument it follows that only those features $f \in \F$ positively contribute to the similarity measure which appear in both graphs $G$ and $G'$.
		The proof of the following theorem can be found in App. A. 
		\begin{theorem}
		    \label{thm:psd}
			The Filtration Kernel $K^\F_{\text{Filt}}$ is positive semi-definite.
		\end{theorem}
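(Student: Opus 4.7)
The plan is to show positive semi-definiteness by decomposing $K^{\F,\A}_{\text{Filt}}$ into a sum of per-feature kernels and verifying that each summand is PSD, exploiting the fact that the cone of PSD kernels is closed under sums, pointwise products, and rank-one outer products.

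First I would recall the three standard closure properties that will be used: (i) if $k_1,k_2$ are PSD kernels on $\G$, then so is $k_1+k_2$; (ii) if $k_1,k_2$ are PSD, then the pointwise product $k_1 \cdot k_2$ is PSD (Schur product theorem); (iii) for any real-valued function $g : \G \to \bbR$, the rank-one kernel $(G,G') \mapsto g(G)\, g(G')$ is trivially PSD. Granting these, it suffices to show that for every $f \in \F$ the map
\begin{equation*}
  (G,G') \mapsto k^\A_f(G,G') \cdot \norm{\phi^\A_f(G)}_1 \norm{\phi^\A_f(G')}_1
\end{equation*}
is a PSD kernel on $\G$. The mass factor $\norm{\phi^\A_f(G)}_1 \norm{\phi^\A_f(G')}_1$ is of the form $g(G) g(G')$ with $g(G) = \norm{\phi^\A_f(G)}_1$, so by (iii) it is PSD. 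By (ii) it then suffices to show that $k^\A_f$ itself is PSD.

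For $k^\A_f$, the key input is the cited optimal-transport result of Le et al.~(2019) together with the classical fact that the $1$-Wasserstein distance on $\bbR$ with Euclidean ground distance $d^1$ is a \emph{negative-definite} metric (equivalently, histograms on the real line embed isometrically into $L^1$ via their cumulative distribution functions). Given negative definiteness of $\W_{d^1}$, Schoenberg's theorem yields that $e^{-\gamma \W_{d^1}(\cdot,\cdot)}$ is PSD for every $\gamma > 0$, which is exactly $k^\A_f$ evaluated on the normalized histograms $\hat{\phi}^\A_f$. Composing with the feature extraction $G \mapsto \hat{\phi}^\A_f(G)$ preserves PSD-ness, so $k^\A_f$ is a PSD kernel on $\G$.

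The main technical obstacle, and the one the authors flag as requiring a separate appendix treatment, is that $\hat{\phi}^\A_f(G)$ and $k^\A_f(G,G')$ are not a priori well-defined when $f$ does not occur in $\A(G)$, since the Wasserstein distance in~\eqref{eq:wasserstein} requires equal $L^1$-mass. I would resolve this exactly as the paper suggests by augmenting every histogram with a dummy bucket that absorbs the mass imbalance, making $\hat{\phi}^\A_f$ a well-defined probability vector on an enlarged ground space, and checking that the augmented ground distance remains a metric so that $\W_{d^1}$ stays negative definite and Schoenberg still applies. Because the per-feature summand carries the multiplicative weight $\norm{\phi^\A_f(G)}_1 \norm{\phi^\A_f(G')}_1$, any arbitrariness in how the dummy bucket is handled is annihilated whenever $f$ is absent from $G$ or $G'$, so the overall kernel value is unaffected. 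Summing the PSD summands over $f \in \F$ via (i) then gives positive semi-definiteness of $K^{\F,\A}_{\text{Filt}}$.
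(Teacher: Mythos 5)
Your proposal is correct and follows essentially the same route as the paper: the same per-feature decomposition into a product of the rank-one mass kernel and the base kernel $k^\A_f$, the same appeal to the Le et al.\ result that $\W_{d^1}$ is (conditionally) negative definite together with Schoenberg's theorem, the same closure properties under sums and products, and the same dummy-bucket resolution of the zero-mass case. The only point the paper makes slightly more explicit is that the potentially infinite sum over $\F$ reduces, via the weighting factor, to a finite sum over features occurring in both graphs, which your remark about the weight annihilating absent features already covers in substance.
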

		
		The filtration kernel is closely related to the histogram kernel (c.f. Eq. \ref{eq:hist_kernel}). 
		In fact, the histogram kernel is a special case of $K^\F_{\text{Filt}}$. 
		\begin{proposition}
		    \label{prop:filtration_to_histogram_kernel}
			For filtrations of length $k=1$ (i.e., filtrations consist of only the graph itself), the filtration kernel $K^\F_{\text{Filt}}$ reduces to the histogram kernel $K^\F_H$.
		\end{proposition}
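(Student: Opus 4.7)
My plan is to unpack the definitions in the $k=1$ case and show the two expressions match term by term. When the filtration has length one, the only filtration graph is $G$ itself, so $\phi^{\A}_f(G) \in \bbR^1$ is the single scalar $c(f(G))$, the number of occurrences of feature $f$ in $G$. Consequently $\norm{\phi^\A_f(G)}_1 = c(f(G))$, and likewise for $G'$.

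Next I would analyze the base kernel $k^\A_f(G,G')$ in this degenerate case. For any feature $f$ with $c(f(G)), c(f(G'))>0$, the mass-normalized histograms $\hat{\phi}^\A_f(G)$ and $\hat{\phi}^\A_f(G')$ are both the unit vector $[1] \in \bbR^1$. Since the Wasserstein distance between two identical unit-mass distributions on a single point vanishes (the only transport plan is the identity and its cost is $d^1(\alpha_1,\alpha_1) = 0$), we obtain $\W_{d^1}(\hat{\phi}^\A_f(G),\hat{\phi}^\A_f(G')) = 0$ and hence $k^\A_f(G,G') = e^{0} = 1$.

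Substituting this into Eq.~\eqref{eq:filtration_kernel} yields
\begin{equation*}
    K^{\F,\A}_{\text{Filt}}(G,G') = \sum_{f \in \F} 1 \cdot \norm{\phi^\A_f(G)}_1 \norm{\phi^\A_f(G')}_1 = \sum_{f \in \F} c(f(G)) \, c(f(G')),
\end{equation*}
which is exactly $K^\F_H(G,G')$ as defined in Eq.~\eqref{eq:hist_kernel}.

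The only minor subtlety — and the closest thing to an obstacle — is features $f$ for which one of the histograms has zero mass, since then the normalized histogram and the Wasserstein distance are not directly defined. However, as the paper notes just before the proposition, this is handled by the dummy entry construction from Appendix~A, and because the corresponding summand is multiplied by $\norm{\phi^\A_f(G)}_1 \norm{\phi^\A_f(G')}_1 = 0$, such features contribute $0$ to both the filtration kernel and the histogram kernel, so the equality is preserved.
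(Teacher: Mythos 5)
Your proof is correct and follows essentially the same route as the paper's: observe that for $k=1$ the normalized histograms coincide so every base kernel equals $1$, that $\norm{\phi^\A_f(G)}_1 = c(f(G))$, and that zero-mass features contribute nothing to either side. You spell out the Wasserstein computation and the dummy-entry caveat a bit more explicitly than the paper does, but the argument is the same.
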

		
		Filtration kernels can alternatively be defined as \emph{products} over base kernels. 
		In App. B
		, we consider such a variant which for $k=1$ reduces to the radial basis function (RBF) kernel. 

	    


\section{A Kernel Instance Using Weisfeiler-Lehman Subtree Features}
	We now discuss a concrete instance of the filtration kernel family which uses the well-known Weisfeiler-Lehman (WL) subtree features.
	We briefly recap the WL relabeling method, formulate the Weisfeiler-Lehman subtree filtration kernel and show that the kernel can in fact be computed in linear time.
	We furthermore show that considering WL labels over certain filtrations increases the expressiveness over the ordinary WL procedure in terms of distinguishing non-isomorphic graphs.

	\subsection{The Weisfeiler-Lehman Method}
		The key idea of the \emph{Weisfeiler-Lehman} procedure is to iteratively aggregate neighborhoods by compressing each node's labels and that of its neighbors into a new label.
		This compression is done by first concatenating a node's label and its ordered (multi-)set of neighbor labels and subsequently hashing this string to a new label using a perfect hash function $f_\#$.
		Thus, with each iteration, a label incorporates increasingly large neighborhoods.  
		More precisely, let $G=(V,E,\ell_0)$ be a graph with initial vertex label function $\ell_0:V \rightarrow \Sigma_0$, where $\Sigma_0$ is the alphabet of original vertex labels. 
		Assuming that there is a total order on alphabet $\Sigma_i$ for all $i \geq 0$, the Weisfeiler-Lehman procedure recursively computes the new label of a node $v$ in iteration $i+1$ by 
		\begin{equation}
			\ell_{i+1}(v) = f_\#(\ell_i(v), [ \ell_i(u): u \in \N(v) ]) \in \Sigma_{i+1}
		\end{equation} 
		where the list of the neighbors' labels is sorted according to the total order on $\Sigma_i$. 
		
		\citet{DBLP:journals/jmlr/ShervashidzeSLMB11} employed the Weisfeiler-Lehman method to define a family of graph kernels of which the \emph{subtree kernel} is perhaps the most popular member. 
		For two graphs $G,G'$, the Weisfeiler-Lehman subtree kernel $K_{WL}$ essentially counts all pairs of mutual node labels. 
		This can be expressed as a histogram kernel on the combined label sets $\F_{WL} = \bigcup_{i \in [h]}\Sigma_i$, i.e., 
		\begin{equation}
			K_{WL}(G,G') = \sum_{l \in \F_{WL}} c(l(G)) ~ c(l(G')) ~~,
		\end{equation} 
		where $c(l(G))$ is the number of appearences of label $l$ in $G$ and $h$ is the depth parameter. 
		
		
		\paragraph{The Weisfeiler-Lehman Isomorphism Test}
		The Weisfeiler-Lehman method was originally designed to decide isomorphism between graphs with one-sided error.
		Two graphs $G,G'$ are not isomorphic if the corresponding multisets $\mso \ell_{i}(v): v \in V(G) \msc$ and $\mso \ell_{i}(v'): v' \in V(G') \msc$ differ for some $i \in \mathbb{N}$; otherwise they may or may not be isomorphic. 
		However, $G$ and $G'$ are isomorphic with high probability if the multisets are equal \citep{babai1979graph}.

	\subsection{The Weisfeiler-Lehman Subtree Filtration Kernel}
	    \label{sec:fwl_kernel}
		Recall that filtrations describe an order in which edges are successively added until the final graph is obtained. 
		During this sequence, neighborhoods of vertices evolve and with them their Weisfeiler-Lehman labels change. 
		Such labels in filtration graphs allow us to consider \emph{partial} neighborhoods and thus also contribute to a \emph{finer} similarity measure between graphs. 
		Fig. \ref{fig:pipeline} depicts an example of a filtration where appearances of a specific WL feature are being tracked. 
		Plugging the WL features $\F_{WL}$ into Eq. \ref{eq:filtration_kernel} yields the \emph{Weisfeiler-Lehman filtration kernel} $K^{\F_{WL}}_{\text{Filt}}$.
		\begin{theorem}
		    \label{thm:complexity}
		    The Weisfeiler-Lehman filtration kernel $K^{\F_{WL}}_{\text{Filt}}(G,G')$ on graphs $G,G'$ can be computed in time $O(hkm)$, where $m$ denotes the number of edges.
		\end{theorem}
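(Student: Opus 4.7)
The plan is to break the kernel evaluation into three stages and show that each stage fits within an $O(hkm)$ budget. Throughout, I will assume (as is standard for graph kernels) that $n = O(m)$, so that $O(hkn) \subseteq O(hkm)$.

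\textbf{Stage 1: Weisfeiler--Lehman labels on filtration graphs.} First I would compute, for each of the $k$ filtration graphs $G_1 \subseteq \ldots \subseteq G_k$, all Weisfeiler--Lehman labels $\ell_0, \ldots, \ell_h$. By the analysis of \citet{DBLP:journals/jmlr/ShervashidzeSLMB11}, running $h$ iterations of WL on a single graph of $m$ edges takes $O(hm)$ time when labels are hashed into integers. Since each filtration graph has at most $m$ edges, processing all $k$ graphs costs $O(hkm)$. (One could plausibly exploit the nestedness of $\A(G)$ for a sharper bound, but this is not needed.)

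\textbf{Stage 2: Filtration histograms.} Each vertex contributes one label to each of $h+1$ WL iterations in each of $k$ filtration graphs, yielding at most $O(hkn)$ label occurrences per graph. Using a global hash table keyed by WL label $f$, I would scan these occurrences and accumulate, for each label $f$ appearing in $G$ or $G'$, the vector $\phi^\A_f(\cdot) \in \bbR^k$. This yields all nonzero filtration histograms for both $G$ and $G'$ in total time $O(hkn) \subseteq O(hkm)$, together with the masses $\|\phi^\A_f(G)\|_1$ and $\|\phi^\A_f(G')\|_1$.

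\textbf{Stage 3: Base kernels and aggregation.} As noted after Theorem~\ref{thm:psd}, any feature $f$ missing from one of the two graphs contributes $0$ to the sum in Eq.~\ref{eq:filtration_kernel}, so I only need to iterate over the set $\F^\cap$ of labels present in both graphs, which I can read off from the hash table in Stage~2. For each $f \in \F^\cap$, I compute the normalized histograms $\hat\phi^\A_f(G), \hat\phi^\A_f(G')$ in $O(k)$ time and then evaluate $\W_{d^1}(\hat\phi^\A_f(G), \hat\phi^\A_f(G'))$. Because the ground distance $d^1$ lives on the real line, the $1$-Wasserstein distance between two histograms of length $k$ can be computed in $O(k)$ time via a single pass over sorted cumulative distributions \citep[][Rem.~2.30]{compOT}, as noted after Eq.~\ref{eq:grounddistance_1dim}. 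Exponentiating gives $k^\A_f(G,G')$ in $O(k)$, and multiplying by the masses adds $O(1)$. Summing over $\F^\cap$ costs $O(k \cdot |\F^\cap|)$; since $|\F^\cap| \leq \min(|\F(G)|, |\F(G')|) = O(hn)$, the total for Stage~3 is $O(hkn) \subseteq O(hkm)$.

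Adding the three stages yields the claimed $O(hkm)$ bound. The main delicate point is Stage~3: if one used a generic cubic-time Wasserstein solver, the complexity would degrade to $O(hnk^3)$, so the argument critically relies on the linear-time $1$D computation together with restricting the summation to features common to both graphs via hashing. Everything else is standard bookkeeping.
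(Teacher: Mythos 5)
Your overall strategy (WL relabeling on each filtration graph, hashing label occurrences into histograms, restricting the sum to mutual features, and exploiting the linear-time closed form of the $1$-dimensional Wasserstein distance) is the same as the paper's. However, Stage~3 contains a genuine counting error. You bound the number of mutual features by $|\F^\cap| \leq \min(|\F(G)|,|\F(G')|) = O(hn)$, but this is off by a factor of $k$: a label may change from one filtration graph to the next (indeed, that is the whole point of the construction, and $\F_{WL}$ explicitly includes labels that appear only in intermediate filtration graphs), so the number of distinct labels occurring in $G$ across the filtration is bounded only by the number of (vertex, iteration, filtration index) triples, i.e.\ $O(hkn)$, and this bound is tight in the worst case. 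With your accounting of $O(k)$ per feature for normalization and the Wasserstein evaluation, Stage~3 then costs $O(k \cdot hkn) = O(hk^2 m)$, which exceeds the claimed $O(hkm)$.

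The fix is an amortized argument, and it is exactly what the paper does: represent each $\phi^\A_f(\cdot)$ sparsely (its nonzero entries, naturally sorted by filtration index), so that normalizing and evaluating $\W_{d^1}(\hat{\phi}^\A_f(G),\hat{\phi}^\A_f(G'))$ costs time linear in the number of \emph{nonzero} entries of the two histograms rather than $O(k)$. Summed over all features, the total number of nonzero histogram entries is bounded by the total number of label occurrences, i.e.\ $O(hkn) \subseteq O(hkm)$, which restores the claimed bound. Stages~1 and~2 of your argument are correct and match the paper; only the per-feature charging scheme in Stage~3 needs to be replaced by this global accounting.
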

		Thus, the kernel increases the complexity of the ordinary Weisfeiler-Lehman subtree kernel merely by the factor $k$, i.e., the length of the filtration. 
		A proof is provided in App. C. 
		

	\subsection{On the Expressive Power of Weisfeiler-Lehman Filtration Kernels}
    	\label{sec:kernel:power}
    	
    	We now show that tracking WL features over filtrations yields powerful methods in terms of expressiveness.
    	The \emph{expressive power} of a method describes its ability to distinguish non-isomorphic graphs; i.e., method $A$ is said to be ``more expressive'' or ``more powerful'' than method $B$ if $A(G) = A(G') \Rightarrow B(G) = B(G')$ and there exist non-isomorphic graphs $G,G'$ such that $A(G) \neq A(G')$ and $B(G) = B(G')$.
    	Recall that the (1-dimensional) WL test for isomorphism is known to be inexact even after $n$ iterations \citep{DBLP:journals/combinatorica/CaiFI92}. 
        However, considering \emph{WL labels over certain filtrations} strictly increases the expressive power. 
        %
        We provide proofs in App. D. 
    	
    	
    	\begin{theorem}\label{thm:power}
    		There exists a filtration function $\A$ such that   
    		 $\phi^\A_f(G)=\phi^\A_f(G')$ for all $f \in \F_{WL}$ if and only if $G$ and $G'$ are isomorphic.
	    \end{theorem}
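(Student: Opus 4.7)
For the forward direction, the argument is standard. Since $\A$ is isomorphism-invariant on edge-weighted graphs (as stated just after Eq.~(3) in the paper), an isomorphism $G \cong G'$ lifts to an isomorphism between the two filtrations, i.e., $G_i \cong G'_i$ for every $i$. Isomorphic graphs have identical WL label multisets at every round, so $\phi^\A_f(G) = \phi^\A_f(G')$ holds for every $f \in \F_{WL}$.

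For the reverse direction, the plan is to exhibit a single, sufficiently refined filtration function $\A$. Concretely, I would use an isomorphism-invariant canonical vertex labeling $\sigma_G$ (for instance, the one induced by the lexicographically minimum adjacency matrix), and set the edge weight $w(uv)$ to the lexicographic rank of the canonical pair $(\sigma_G(u),\sigma_G(v))$. This yields a filtration of length $k = m+1$ in which $G_1 = (V,\emptyset)$ and $G_{i+1}$ is obtained from $G_i$ by adding exactly one canonical edge $e_i$. Isomorphism-invariance of $\sigma_G$ guarantees that $\A$ is well-defined on unweighted graphs and that isomorphic graphs produce aligned filtrations.

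Given this $\A$, suppose $\phi^\A_f(G) = \phi^\A_f(G')$ for all $f \in \F_{WL}$. Equivalently, the multiset of WL labels of $G_i$ coincides with that of $G'_i$ at every step. I would prove $G \cong G'$ by induction on $i$, showing that $G_i \cong G'_i$ at every level (and therefore in particular at $i = m+1$, where $G_{m+1} = G$ and $G'_{m+1} = G'$). The base case $i=1$ is immediate. For the inductive step, the single new edge $e_i$ added at step $i+1$ changes precisely two entries of the WL multiset (the labels of its endpoints), so the difference between the multisets at steps $i$ and $i+1$ pins down the WL types of the endpoints of $e_i$ in $G'$ as well; combining this with the inductive isomorphism $\pi_i : G_i \to G'_i$ and the canonicity of the edge order lets us extend $\pi_i$ to an isomorphism $\pi_{i+1}: G_{i+1} \to G'_{i+1}$, possibly after composing with a suitable automorphism of $G_i$.

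The hard part is making this inductive step rigorous. Because WL is in general incomplete, agreement of the WL multisets at a single step is not by itself enough to identify the added edge up to automorphism; the argument must genuinely use the full sequence of per-step agreements together with the fact that $\A$ adds only one edge per step. Moreover, when intermediate graphs $G_i$ have nontrivial automorphism groups, several candidate extensions of $\pi_i$ may coexist, and the canonicity of the edge ordering is what breaks these ties consistently on both sides. Once these details are handled, the induction reaches $i = m+1$ and directly yields $G \cong G'$.
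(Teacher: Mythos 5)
Your construction of the filtration (edge weights derived from a canonical ordering, so that edges enter one at a time in a canonical order) is essentially the same as the paper's, and your forward direction is fine. The problem is the reverse direction: the inductive step is a genuine gap, which you acknowledge but do not close. From the inductive hypothesis $G_i \cong G'_i$ together with equality of the WL label multisets of $G_{i+1}$ and $G'_{i+1}$, you only learn the \emph{WL types} of the endpoints of the newly inserted edge; WL types do not determine orbits of the automorphism group of $G_i$, so this does not pin down the new edge up to automorphism. Nor can ``canonicity of the edge order'' rescue the step, because the hypothesis of the theorem gives you access only to the filtration histograms of $G'$, not to its canonical ordering. In effect you are trying to deduce $G_i \cong G'_i$ at every level from WL information alone, which is precisely the kind of conclusion that WL incompleteness prevents you from getting for free.

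The paper avoids the induction entirely by arguing in the contrapositive. If $G \not\cong G'$ (with equal vertex counts, else the claim is trivial), their canonical adjacency strings $A(c(G))$ and $A(c(G'))$ differ; take the \emph{largest} index $x$ at which they differ. All entries with index greater than $x$ agree, so $G_x = (V,\{e : w(e) \geq x\})$ and $G'_x$ receive the same edges from those entries, but exactly one of the two gains an additional edge from the entry at position $x$; hence $\abs{E(G_x)} \neq \abs{E(G'_x)}$. Two graphs on the same number of vertices with different edge counts have different degree sums, so already their iteration-$1$ WL label multisets differ (this is the paper's elementary Lemma on depth-$1$ WL). Therefore some $f \in \F_{WL}$ satisfies $\phi^\A_f(G) \neq \phi^\A_f(G')$. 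This single observation replaces your entire induction, and I would recommend restructuring your argument along these lines.
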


        In fact, it holds that already for depth-$1$ WL labels (i.e. $h=1$), there are filtrations that \emph{correctly} decide isomorphism for \emph{all} graphs.
    	As a first implication, the Weisfeiler-Lehman subtree filtration kernel is -- given a suitable filtration --  strictly more expressive than the ordinary WL subtree kernel on the original graphs.
    	
    	
    	\begin{corollary}\label{corr:power:kernels}
	    	There exists a filtration function $\A$ such that the kernel $K^{\F_{WL},\A}_{\text{Filt}}$ is \emph{complete}.
    	\end{corollary}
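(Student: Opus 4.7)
The plan is to derive completeness directly from Theorem 4 by unpacking the Hilbert-space representation of $K^{\F_{WL},\A}_{\text{Filt}}$. Fix the filtration function $\A$ witnessing Theorem 4, so that $\phi^\A_f(G)=\phi^\A_f(G')$ for every $f\in\F_{WL}$ is already equivalent to $G\cong G'$. By Theorem 1 each base kernel $k^\A_f$ is positive semi-definite, hence admits a feature map $\varphi_f$ into a Hilbert space $\mathcal{H}_f$. Setting $\psi_f(G)=\norm{\phi^\A_f(G)}_1\,\varphi_f(G)$ and $\Psi(G)=\bigoplus_{f\in\F_{WL}}\psi_f(G)$ yields $K^{\F_{WL},\A}_{\text{Filt}}(G,G')=\scalprod{\Psi(G)}{\Psi(G')}$, so completeness is equivalent to showing that $\Psi(G)=\Psi(G')$ holds iff $G\cong G'$.

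The forward direction is immediate: isomorphic graphs generate equivalent filtrations (as observed in the filtration section), so every filtration histogram and hence every $\psi_f$ agrees. For the reverse direction, assume $\Psi(G)=\Psi(G')$, which means $\psi_f(G)=\psi_f(G')$ for every $f\in\F_{WL}$. Because $W_{d^1}$ vanishes on identical histograms, $k^\A_f(G,G)=e^0=1$, so $\varphi_f$ has unit norm and $\norm{\psi_f(G)}=\norm{\phi^\A_f(G)}_1$. Taking norms therefore forces $\norm{\phi^\A_f(G)}_1=\norm{\phi^\A_f(G')}_1$ for every $f$. If this common value is $0$, then $\phi^\A_f(G)=\phi^\A_f(G')=0$. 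Otherwise the positive scalar cancels and $\varphi_f(G)=\varphi_f(G')$, which via $\norm{\varphi_f(G)-\varphi_f(G')}^2=2\bigl(1-k^\A_f(G,G')\bigr)=0$ together with the fact that $W_{d^1}$ is a metric (since $d^1$ is Euclidean on the real line) yields $\hat{\phi}^\A_f(G)=\hat{\phi}^\A_f(G')$. Combined with the matching masses this lifts to $\phi^\A_f(G)=\phi^\A_f(G')$ for every $f\in\F_{WL}$, and Theorem 4 delivers $G\cong G'$.

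The main obstacle is disentangling the multiplicative mass weighting in Eq.~\ref{eq:filtration_kernel} from the Gaussian-Wasserstein base kernel: one needs both the masses and the normalized histograms to be recovered from $\Psi$ separately. The key leverage point is that the base kernel evaluates to $1$ on the diagonal, which identifies the mass with the norm of $\psi_f$ and thereby decouples the two pieces of information; the metric property of $W_{d^1}$ then does the rest. A minor bookkeeping point is the zero-mass case, where the formal dummy-entry convention from App.~A keeps the argument well-defined but plays no role since $\psi_f$ simply vanishes on both sides.
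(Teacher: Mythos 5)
Your proof is correct and follows essentially the same route as the paper's: both reduce completeness to Theorem~\ref{thm:power} via the per-feature decomposition $\psi_f(G)=\norm{\phi^\A_f(G)}_1\varphi_f(G)$, exploiting that $k^\A_f(H,H)=1$ forces $\norm{\varphi_f(H)}=1$ (so the mass is recoverable as $\norm{\psi_f}$) and that the metric property of $\W_{d^1}$ recovers the normalized histogram. The only difference is organizational: you argue the contrapositive direction as a single forward chain, whereas the paper splits into the cases of equal versus unequal masses; the content is identical.
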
 

        A kernel is called \emph{complete} if its feature map $\varphi$ satisfies $\varphi(G)=\varphi(G') \Leftrightarrow G,G'$ are isomorphic \citep{DBLP:conf/colt/GartnerFW03}.
    	While such a filtration is not known to be efficiently computable, there are efficiently computable filtrations that result in strictly more expressive (but incomplete) WL filtration kernels when compared to the ordinary WL subtree kernel.
        %
    	As an example of such an efficiently computable filtration, consider the function that annotates edges by the number of triangles they belong to. 
    	This measure can be computed in polynomial time.
    	Figure~\ref{fig:power:c} shows two 3-regular graphs that can be distinguished using this filtration.\footnote{
    	    It is obvious that using the proposed weights as edge labels allows the Weisfeiler-Lehman isomorphism test to distinguish these two edge labeled graphs.
    	    However, in App. D
    	    , we show that it suffices to consider the WL labels on the unlabeled filtration graphs. 
    	}
    	This concept can be extended to larger (or multiple) subgraphs, which allows for more expressive filtrations, similar to the approach of \citet{Bouritsas2020gnnwithisomorphismcounts}.
    	
    	\begin{figure}
    	    \centering
    	    \begin{tikzpicture}[baseline=(current bounding box.south)]
    \newcommand{\edgelength}[0]{0.58}
    \newcommand{\skeleton}[1]{
        \coordinate (x) at #1;
        \node[circle, draw=black, inner sep=2.3pt] (a) at ($ (x)  $) {}; 
        \node[circle, draw=black, inner sep=2.3pt] (b) at ($ (x) + (210:\edgelength) $) {}; 
        \node[circle, draw=black, inner sep=2.3pt] (c) at ($ (x) + (270:\edgelength) $) {}; 
        \node[circle, draw=black, inner sep=2.3pt] (d) at ($ (c) + (0:\edgelength) $) {}; 
        \node[circle, draw=black, inner sep=2.3pt] (e) at ($ (d) + (30:\edgelength) $) {}; 
        \node[circle, draw=black, inner sep=2.3pt] (f) at ($ (d) + (90:\edgelength) $) {}; 
    }
    \newcommand{\descriptionnode}[1]{\node (description) at ($ (f) + (-0.5 * \edgelength, 1.0 * \edgelength) $) {\small #1};}
    \newcommand{\roundbox}{\draw[rounded corners, draw=gray] ($ (x) + (-1.2 * \edgelength, 0.7 * \edgelength) $) rectangle ($ (d) + (1.2 * \edgelength, -0.5 *\edgelength) $);}
    \newcommand{\rounddoublebox}{\draw[rounded corners, draw=gray] ($ (x) + (-1.2 * \edgelength, 0.7 * \edgelength) $) rectangle ($ (d) + (4.6 * \edgelength, -0.5 *\edgelength) $);
    \draw[dash pattern=on 1pt off 1pt, draw=gray] ($ (d) + (1.2 * \edgelength, -0.5 *\edgelength) $) -- ($ (f) + (1.2 * \edgelength, 0.7 *\edgelength) $);}

    \skeleton{(0, 0)}
    \draw[-] (a) -- node[above=-0.05] {\tiny 1} (b) 
                 -- node[below=-0.05] {\tiny 1} (c) 
                 -- node[right=-0.05] {\tiny 1} (a)
             (d) -- node[below=-0.05] {\tiny 1} (e) 
                 -- node[above=-0.05] {\tiny 1} (f) 
                 -- node[left=-0.05] {\tiny 1} (d)
             (c) -- node[below=-0.05] {\tiny 0} (d)  
             (f) -- node[above=-0.05] {\tiny 0} (a);
    \draw[-] (b) .. controls ($ (a) + (135:\edgelength) $) and ($ (f) + (45:\edgelength) $) .. node[left=0.45] {\tiny 0} (e);

    \descriptionnode{$G$};
    \roundbox
    \node[anchor=west] at ($ (e) + (0.45 * \edgelength, 0 * \edgelength) $) {\tiny $\to$};
    
    \coordinate (bbnw) at ($ (description) + (-0.45 * \edgelength, 0.45 * \edgelength) $); 

    \skeleton{($ (0, -1.8) $)}
    \draw[-] (a) -- node[above=-0.05] {\tiny 0} (b) 
                 -- node[below=-0.05] {\tiny 0} (c) 
                 -- node[below=-0.05] {\tiny 0} (d)
                 -- node[below=-0.05] {\tiny 0} (e) 
                 -- node[above=-0.05] {\tiny 0} (f) 
                 -- node[above=-0.05] {\tiny 0} (a)
             (c) -- node[left=-0.05] {\tiny 0} (f)  
             (d) -- node[right=-0.05] {\tiny 0} (a);
    \draw[-] (b) .. controls ($ (a) + (135:\edgelength) $) and ($ (f) + (45:\edgelength) $) .. node[left=0.45] {\tiny 0} (e); 
    
    \descriptionnode{$G'$};
    \roundbox 
    \node[anchor=west] at ($ (e) + (0.45 * \edgelength, 0 * \edgelength) $) {\tiny $\to$};

    
    \skeleton{($ (4.5 * \edgelength, 0) $)}
    \draw[-] (a) -- (b) -- (c) -- (a) 
             (d) -- (e) -- (f) -- (d);
             
    \descriptionnode{$G_1$};
    \rounddoublebox

    \skeleton{($ (4.5 * \edgelength, -1.8) $)}
    
    \descriptionnode{$G'_1$};
    \rounddoublebox

    
    \skeleton{($ (7.9 * \edgelength, 0) $)}
    \draw[-] (a) -- (b) 
                 -- (c) 
                 -- (a)
             (d) -- (e) 
                 -- (f) 
                 -- (d)
             (c) -- (d)  
             (f) -- (a);
    \draw[-] (b) .. controls ($ (a) + (135:\edgelength) $) and ($ (f) + (45:\edgelength) $) .. (e);

    \descriptionnode{$G_2$};

    \skeleton{($ (7.9 * \edgelength, -1.8) $)}
    \draw[-] (a) -- (b) 
                 -- (c) 
                 -- (d)
                 -- (e) 
                 -- (f) 
                 -- (a)
             (c) -- (f)  
             (d) -- (a);
    \draw[-] (b) .. controls ($ (a) + (135:\edgelength) $) and ($ (f) + (45:\edgelength) $) .. (e); 
    
    \descriptionnode{$G'_2$};
    \coordinate (bbse) at ($ (b) + (1 * \edgelength, -0.33 * \edgelength) $); 
    
    \useasboundingbox (bbnw) rectangle (bbse);
    
\end{tikzpicture}
    	    \caption{Consider the unlabeled 3-regular graphs $G,G'$ which cannot be distinguished using the Weisfeiler-Lehman isomorphism test.
    	    However, adding edge weights that count the number of triangles that the edges are part of yields filtration graphs $G_1,G_1'$. These can now clearly be distinguished by the WL isomorphism test. }
    	    \label{fig:power:c}
    	\end{figure}
    	
    	Recently, there has been a large body of work that relates the expressive power of certain graph neural networks to that of the Weisfeiler-Lehman isomorphism test \citep{DBLP:conf/iclr/XuHLJ19, morris2019wlneural}.
    	While this is orthogonal to our work, Theorem~\ref{thm:power} implies that a (neural) ensemble of graph neural networks over a suitably chosen filtration is strictly more powerful than a graph neural network on the original graph(s) alone:
    	
    	\begin{corollary}\label{corr:power:gnns}
    	    There exists a filtration function $\A$ and GNN \citep{DBLP:conf/iclr/XuHLJ19} $\mathcal{N}$ such that $\mathcal{N}$ can distinguish any two non-isomorphic graphs when provided with the filtration graphs corresponding to $\A$. 
    	\end{corollary}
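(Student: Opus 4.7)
The plan is to combine Theorem~\ref{thm:power} with the characterization of GIN-style GNNs as being as expressive as the 1-WL test. By Theorem~\ref{thm:power}, there exists a filtration function $\A$ such that the collection of WL filtration histograms $\{\phi^\A_f(G) : f \in \F_{WL}\}$ determines $G$ up to isomorphism. So it suffices to build a GNN whose output, when evaluated on the filtration sequence $\A(G) = G_1 \subseteq \ldots \subseteq G_k$, encodes exactly this collection of histograms.

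First I would fix the filtration $\A$ provided by Theorem~\ref{thm:power}. By the main result of \citet{DBLP:conf/iclr/XuHLJ19}, there is a GIN-style base network $\mathcal{N}_{\text{base}}$ whose neighbor-aggregation and graph-level readout are injective multiset functions. After $h$ rounds of message passing, its graph-level output is therefore an injective function of the multiset $\mso \ell_h(v) : v \in V(G) \msc$ of depth-$h$ WL labels; two graphs are mapped to the same output by $\mathcal{N}_{\text{base}}$ iff the $h$-round WL test cannot distinguish them.

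Next I would assemble the ensemble GNN $\mathcal{N}$ that, given the filtration graphs $G_1, \ldots, G_k$ of $\A(G)$, applies $\mathcal{N}_{\text{base}}$ to each $G_i$ in parallel (with a positional tag $i$ injected via an injective embedding), and aggregates the resulting per-level representations with a further injective multiset function. Two graphs $G, G'$ then share the same ensemble output iff for every $i \in [k]$ the depth-$h$ WL label multisets of $G_i$ and $G'_i$ agree, which is exactly the statement that $\phi^\A_f(G) = \phi^\A_f(G')$ for every $f \in \F_{WL}$. Invoking Theorem~\ref{thm:power} yields $G \cong G'$, so $\mathcal{N}$ distinguishes every pair of non-isomorphic graphs.

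The main obstacle is interpreting ``a GNN $\mathcal{N}$'' coherently, since the input is a sequence of graphs rather than a single one. I would either (a) formalize $\mathcal{N}$ as a weight-shared parallel application of $\mathcal{N}_{\text{base}}$ to each $G_i$ followed by an injective aggregation across levels, or (b) merge the filtration into a single edge-attributed graph whose edge attributes record the index at which each edge first appears, and appeal to the edge-labeled extension of the expressiveness result of \citet{DBLP:conf/iclr/XuHLJ19}. Both routes preserve the required injectivity, so combining them with Theorem~\ref{thm:power} completes the argument.
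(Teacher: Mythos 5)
Your proposal is correct and follows essentially the same route as the paper: invoke Theorem~3 of \citet{DBLP:conf/iclr/XuHLJ19} to get an injective GIN-style base network, apply it to every filtration graph, aggregate the per-level embeddings injectively, and conclude via Theorem~\ref{thm:power}. Your addition of a positional tag before the final aggregation is a slight refinement over the paper's ``injective readout over the set of readouts'' (which, read literally, could conflate graphs whose per-level embeddings agree as multisets but not level-by-level), but the argument is otherwise identical.
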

	
	\section{Related Work}
\label{sec:related}

\paragraph{Graph Kernels}
Over the past twenty years there has been a multitude of works constructing kernels for graph structured data, with many well known examples within the $\mathcal{R}$-convolution framework \citep{Haussler99,DBLP:conf/colt/GartnerFW03,DBLP:conf/icdm/BorgwardtK05,DBLP:journals/jmlr/ShervashidzeVPMB09,DBLP:journals/jmlr/ShervashidzeSLMB11,DBLP:conf/kdd/HorvathGW04}.
We refer to \citet{DBLP:journals/ftml/BorgwardtGLOR20,DBLP:journals/ans/KriegeJM20} for recent surveys.
The seminal work by \citet{DBLP:journals/jmlr/ShervashidzeSLMB11} introduce Weisfeiler-Lehman labels as a means to define graph kernels.
\citet{DBLP:conf/nips/KriegeGW16} propose a discrete optimal assignment graph kernel based on vertex kernels obtained from the hierarchy of their WL labels.
\citet{DBLP:conf/nips/TogninalliGLRB19} extend this idea and allow fractional assignments using Wasserstein distances.
\citet{DBLP:conf/icml/RieckBB19} introduce a method which individually weighs WL label occurences by their persistent homology information. 
The difference to our approach is that we consider arbitrary graph features. 
Furthermore, in the particular case of Weisfeiler-Lehman features, we allow additional WL features appearing only in filtration graphs.
Additionally, they provide a kernel based on persistence tuple distances, which is, however, restricted to non-attributed graphs. 
\citet{DBLP:conf/ijcai/NikolentzosMLV18} compute $k$-core decompositions to compare graphs using such hierarchy of subgraphs.
While conceptually similar to our approach, the method is limited to nested subgraph chains generated by $k$-cores. 
More importantly, the authors revert to applying existing graph kernels to the graphs' $k$-cores for each $k$ and thus merely compare subgraphs of the same granularities, whereas we introduce a family of novel graph kernels that compare feature distributions over multiple scales. 



\paragraph{Expressiveness}
In pioneer work, \citet{DBLP:conf/colt/GartnerFW03} have shown that \emph{complete} graph kernels, i.e., kernels that map isomorphism classes to distinct points in some Hilbert space, are at least as hard to compute as graph isomorphism.
While not complete, the Weisfeiler-Lehman subtree kernel maps  graphs that can be distinguished by the WL isomorphism test to different feature vectors and is hence equally expressive. 
Recently, \citet{DBLP:conf/iclr/XuHLJ19} and \citet{morris2019wlneural,morris2020wlsparse, morris2021wlpower} have investigated the connection between the expressive power of graph neural networks and the WL isomorphism test. 
Briefly speaking, if the ($k$-dimensional) Weisfeiler-Lehman isomorphism test cannot distinguish two graphs then (higher order) graph neural networks cannot distinguish these graphs, either, and vice versa.
Recently, however, there has been significant work in extending the expressive power of graph neural networks by adding auxiliary information such as subgraph isomorphism counts \citep{Bouritsas2020gnnwithisomorphismcounts} or distance encodings \citep{Li2020gnnwithdistanceencoding}.

\enlargethispage*{1em}
	

\newcommand{\WL}[0]{\text{WL\xspace}}
\newcommand{\WWL}[0]{\text{W-WL}\xspace}
\newcommand{\PWL}[0]{\text{P-WL}\xspace}
\newcommand{\FPWLdeg}[0]{\text{FWL-D}\xspace}
\newcommand{\FPWLrw}[0]{\text{FWL-RW}\xspace}
\newcommand{\GS}[0]{\text{GS}\xspace}
\newcommand{\SP}[0]{\text{SP}\xspace}
\newcommand{\WLOA}[0]{\text{WL-OA}\xspace}
\newcommand{\BL}[0]{\text{VE-Hist}\xspace}
\newcommand{\CoreWL}[0]{\text{Core-WL}\xspace}

\definecolor{baselinecolor}{RGB}{1,133,113}
\definecolor{GScolor}{RGB}{102,205,170}
\definecolor{SPcolor}{RGB}{95,158,160}
\definecolor{WLcolor}{RGB}{205,92,92}
\definecolor{WLOAcolor}{RGB}{255,150,100} 
\definecolor{WWLcolor}{RGB}{166,97,26}
\definecolor{PWLcolor}{RGB}{223,194,125}
\definecolor{CoreWLcolor}{RGB}{255,128,0}
\definecolor{FPWLdegcolor}{RGB}{72,61,139}
\definecolor{FPWLrwcolor}{RGB}{70,130,180}

\newcommand{\WLmark}[0]{square*}
\newcommand{\WWLmark}[0]{halfsquare*}
\newcommand{\PWLmark}[0]{diamond*}
\newcommand{\FPWLdegmark}[0]{halfcircle*}
\newcommand{\FPWLrwmark}[0]{*}
\newcommand{\GSmark}[0]{x}
\newcommand{\SPmark}[0]{star}
\newcommand{\WLOAmark}[0]{pentagon*}
\newcommand{\BLmark}[0]{triangle*}

\newcommand{\barlights}[1]{#1!70!white}

\section{Experimental Evaluation}
	In this section, we evaluate the predictive performance of our Weisfeiler-Lehman filtration kernel\footnote{Available at \url{ https://github.com/mlai-bonn/wl-filtration-kernel}} introduced above and compare it to several state-of-the-art graph kernels. 
	We demonstrate that our method significantly outperforms its competitors on several real-world benchmark datasets.

	\paragraph{Experimental Setup \& Datasets}
		We compare our method to a variety of state-of-the-art graph kernels and include a simple baseline method to put the results into perspective. 
		The experiments are conducted on the well-established molecular datasets DHFR, NCI1 and PTC-MR \citep[obtained from][]{DBLP:journals/corr/abs-2007-08663} as well as the large network benchmark datasets IMDB-BINARY \citep[obtained from][]{DBLP:journals/corr/abs-2007-08663} and EGO-1 to EGO-4. 
		This selection contains only such datasets on which a simple histogram baseline kernel was outperformed by more sophisticated graph kernels. 
		We measure the accuracies obtained by support vector machines (SVM) using a $10$-fold stratified cross-validation. 
		A grid search over sets of kernel specific parameters is used for optimal training. 
		We perform $10$ such cross-validations and report the mean and standard deviation. 
		More detailed information is available in App. E. 

		
	\subsection{Filtration Variants}
	\label{sec:experiments:filtrationdefinition}
		A unique parameter of our kernels is the graph filtration. 
		If such a filtration is not provided through expert knowledge or via available edge weights, one can be generated based on the data at hand. 
		Recall that a filtration is induced by the value sequence $\alpha_1 \geq \ldots \geq \alpha_k$. 
		While there exist infinitely many such sequences, in the following experiments, we govern only its length (i.e. $k$) and generate the $\alpha_i$s according to some fixed process. 
		In this work, we implement this process by first ordering and subsequently partitioning the (multi)-set of edge weights that appear in a given dataset using a simple $k$-means clustering. 
		Value $\alpha_i$ is then chosen as the minimum element of the $i$-th cluster.
		Thus, the $i$-th filtration graph $G_i$ contains edges with edge weight at least $\alpha_i$.
		
		As the above datasets are not explicitly equipped with edge weights, we consider two exemplary edge weight functions $w_{d}$ and $w^\lambda_{rw}$ that each assign weights to edges according to the edges' structural relevance w.r.t. a specific property. 
		    The first function $w_{d}$ assigns an edge $e=\{u,v\}$ a weight equal to the maximum degree of its incident vertices, i.e., $w_d(e) = max(deg(u), deg(v))$. 
		    Thus, edges with incident vertices of high degree appear early in the graph filtrations. 
		    The second function $w^\lambda_{rw}$ considers the number of random walks between adjacent vertices. 
		    That is, for edge $e=\{u,v\}$, $w^\lambda_{rw}(e)$ is the number of random walks of length at most $\lambda$ from $u$ to $v$.
		
		In the following, the Weisfeiler-Lehman filtration kernel applied on dataset graphs with edge weights calculated according to $w_{d}$ is referred to as \FPWLdeg.
		Analogously, \FPWLrw denotes the kernel variant based on weights computed according to $w^\lambda_{rw}$.
		\enlargethispage*{1em}

	\subsection{Real-World Benchmarks}
		Figure \ref{fig:benchmarks} compares the predictive performances of the filtration kernel to various state-of-the-art graph kernels on a range of real-world benchmark datasets. 
		On datasets NHFR, NCI1, PTC-MR and IMDB-BINARY, there are only small discrepancies between our method and the best performing competitor kernels.
		In fact, with PTC-MR being an exception, the results of all kernels using Weisfeiler-Lehmann subtree features are virtually indistinguishable. 
		This changes for the EGO datasets. 
		While on EGO-1, our \FPWLdeg variant is second only to the shortest-path kernel, it significantly outperforms all tested kernels on EGO-2, EGO-3 and EGO-4, amounting to a roughly $10\%$ accuracy increase for the case of the latter dataset. 
		It becomes apparent from the results of the \FPWLrw variant that our approach is particularly reliant on the provided edge weights. 
		However, the evaluation also suggests that in case the data at hand is not equipped by meaningful edge weights, very simple edge weight functions already suffice to significantly increase the predictive performance over state-of-the-art graph kernels. 
		
		\begin{figure*}[t]
			\begin{center}
				\resizebox{1.0\textwidth}{!}{\begin{tikzpicture}
\begin{axis}[
	width=24cm,
	height=9cm,
	bar shift auto,
	bar width=0.15cm,
	legend style={at={(0.5,1.0)}, anchor=north,legend columns=-1, /tikz/every even column/.append style={column sep=0.25cm}},
    xtick={1,...,9},
    xticklabels={DHFR, NCI1, PTC-MR, IMDB-B., EGO-1, EGO-2, EGO-3, EGO-4},
    ymajorgrids,
    ybar,
    ylabel={Accuracy in \%},
    ylabel near ticks,
    ymin=20,
    ymax=95
    ]

\addplot[
	baselinecolor, 
	fill=baselinecolor, 
	draw=baselinecolor, 
	error bars/.cd, 
	y dir=both, 
	y explicit, 
	error bar style={draw=\barlights{baselinecolor}}, 
	error mark options={mark size=2pt, rotate=90, draw=\barlights{baselinecolor}}, 
]
table [y error=error] {
	x   y		error    
    1	72.86	0.98
    2	69.82	0.40
    3	56.88	1.64
    4	70.36	0.84
    5	29.00	3.22
    6	26.30	2.73
    7	23.75	2.24
    8	26.40	2.37
};

\addplot[
	GScolor, 
	fill=GScolor, 
	draw=GScolor, 
	error bars/.cd, 
	y dir=both, 
	y explicit, 
	error bar style={draw=\barlights{GScolor}}, 
	error mark options={mark size=2pt, rotate=90, draw=\barlights{GScolor}}, 
]
table [y error=error] {
	x   y       error    
	1	60.77	0.31
    2	60.54	1.34
    3	55.82	1.62
    4	66.62	0.77
    5	60.45	2.31
    6	52.90	2.25
    7	51.85	2.93
    8	51.40	3.75

};

\addplot[
	SPcolor, 
	fill=SPcolor, 
	draw=SPcolor, 
	error bars/.cd, 
	y dir=both, 
	y explicit, 
	error bar style={draw=\barlights{SPcolor}}, 
	error mark options={mark size=2pt, rotate=90, draw=\barlights{SPcolor}}, 
]
table [y error=error] {
	x   y           error    
	1	78.36	0.81
    2	73.69	0.24
    3	57.23	1.93
    4	49.50	1.38
    5	66.80	2.14
    6	57.40	1.70
    7	59.80	1.18
    8	59.30	2.04
};

\addplot[
	WLcolor, 
	fill=WLcolor, 
	draw=WLcolor, 
	error bars/.cd, 
	y dir=both, 
	y explicit, 
	error bar style={draw=\barlights{WLcolor}}, 
	error mark options={mark size=2pt, rotate=90, draw=\barlights{WLcolor}}, 
]
table [y error=error] {
	x   y           error    
	1	82.34	0.86
    2	85.67	0.23
    3	61.07	1.78
    4	72.12	0.79
    5	51.90	1.81
    6	58.00	2.22
    7	56.50	2.26
    8	54.05	2.23 
};

\addplot[
	WLOAcolor, 
	fill=WLOAcolor, 
	draw=WLOAcolor, 
	error bars/.cd, 
	y dir=both, 
	y explicit, 
	error bar style={draw=\barlights{WLOAcolor}}, 
	error mark options={mark size=2pt, rotate=90, draw=\barlights{WLOAcolor}}, 
]
table [y error=error] {
	x   y           error    
	1	82.92	0.69
    2	85.91	0.27
    3	61.85	1.72
    4	73.06	0.65
    5	58.25	1.23
    6	63.50	1.80
    7	57.80	1.80
    8	55.35	1.90
};

\addplot[
	WWLcolor, 
	fill=WWLcolor, 
	draw=WWLcolor, 
	error bars/.cd, 
	y dir=both, 
	y explicit, 
	error bar style={draw=\barlights{WWLcolor}}, 
	error mark options={mark size=2pt, rotate=90, draw=\barlights{WWLcolor}}, 
]
table [y error=error] {
	x   y           error    
    1	82.10	00.70
    2	85.70	00.46
    3	66.30	01.62
    4	73.30	00.78
    5	59.90	01.14
    6	63.10	02.07
    7	58.40	01.36
    8	55.80	01.99
};

\addplot[
	PWLcolor, 
	fill=PWLcolor, 
	draw=PWLcolor, 
	error bars/.cd, 
	y dir=both, 
	y explicit, 
	error bar style={draw=\barlights{PWLcolor}}, 
	error mark options={mark size=2pt, rotate=90, draw=\barlights{PWLcolor}}, 
]
table [y error=error] {
	x   y           error    
    1	83.9610000000	0.2319245567
    2	85.6810000000	0.0805543295
    3	64.7020000000	0.4205425068
    4	72.0000000000	0.2049878045
    5	60.8950000000	0.6520927848
    6	58.9300000000	0.5573149917
    7	52.4100000000	0.7729165544
    8	53.2050000000	0.8594911285
};

\addplot[
	CoreWLcolor, 
	fill=CoreWLcolor, 
	draw=CoreWLcolor, 
	error bars/.cd, 
	y dir=both, 
	y explicit, 
	error bar style={draw=\barlights{CoreWLcolor}}, 
	error mark options={mark size=2pt, rotate=90, draw=\barlights{CoreWLcolor}}, 
]
table [y error=error] {
	x   y           error    
    1	83.45	0.55
    2	85.48	0.37
    3	58.76	2.36
    4	73.54	0.78
    5	57.10	2.17
    6	57.40	2.22
    7	51.10	2.62
    8	53.05	2.24
};

\addplot[
	FPWLdegcolor, 
	fill=FPWLdegcolor, 
	draw=FPWLdegcolor, 
	error bars/.cd, 
	y dir=both, 
	y explicit, 
	error bar style={draw=\barlights{FPWLdegcolor}}, 
	error mark options={mark size=2pt, rotate=90, draw=\barlights{FPWLdegcolor}}, 
]
table [y error=error] {
	x   y       error    
	1	82.75	0.77
    2	85.45	0.27
    3	62.44	1.32
    4	72.22	0.87
    5	63.60	2.83
    6	67.60	1.26
    7	66.65	1.13
    8	69.40	1.51
};

\addplot[
	FPWLrwcolor, 
	fill=FPWLrwcolor, 
	draw=FPWLrwcolor, 
	error bars/.cd, 
	y dir=both, 
	y explicit, 
	error bar style={draw=\barlights{FPWLrwcolor}}, 
	error mark options={mark size=2pt, rotate=90, draw=\barlights{FPWLrwcolor}}, 
]
table [y error=error] {
	x   y           error    
    1	81.93	0.65
    2	85.67	0.21
    3	61.22	1.72
    4	72.73	0.74
    5	50.95	1.34
    6	58.70	1.58
    7	57.55	2.54
    8	54.10	1.26
};

\legend{\BL, \GS, \SP, \WL, \WLOA, \WWL, \PWL, \CoreWL, \FPWLdeg, \FPWLrw}
\path (axis cs:1,0) -- coordinate (m) (axis cs:2,0);
\draw[lightgray] (m) -- (current axis.north -| m);
\path (axis cs:2,0) -- coordinate (m) (axis cs:3,0);
\draw[lightgray] (m) -- (current axis.north -| m);
\path (axis cs:3,0) -- coordinate (m) (axis cs:4,0);
\draw[lightgray] (m) -- (current axis.north -| m);
\path (axis cs:4,0) -- coordinate (m) (axis cs:5,0);
\draw[lightgray] (m) -- (current axis.north -| m);
\path (axis cs:5,0) -- coordinate (m) (axis cs:6,0);
\draw[lightgray] (m) -- (current axis.north -| m);
\path (axis cs:6,0) -- coordinate (m) (axis cs:7,0);
\draw[lightgray] (m) -- (current axis.north -| m);
\path (axis cs:7,0) -- coordinate (m) (axis cs:8,0);
\draw[lightgray] (m) -- (current axis.north -| m);
\end{axis}
\end{tikzpicture}}
			\end{center}
			\caption{Classification accuracies and standard deviations on real-world benchmark datasets. 
			We compare our approach to a simple baseline kernel (\BL), the graphlet sampling (\GS) kernel \citep{DBLP:journals/jmlr/ShervashidzeVPMB09}, the shortest-path (\SP) kernel \citep{DBLP:conf/icdm/BorgwardtK05}, the original Weisfeiler-Lehman (\WL) kernel \citep{DBLP:journals/jmlr/ShervashidzeSLMB11}, the Weisfeiler-Lehman optimal assignment (\WLOA) kernel \citep{DBLP:conf/nips/KriegeGW16}, the Wasserstein Weisfeiler-Lehman (\WWL) kernel \citep{DBLP:conf/nips/TogninalliGLRB19}, the persistent Weisfeiler-Lehman (\PWL) method \citep{DBLP:conf/icml/RieckBB19}, and the core variant of the Weisfeiler-Lehman (\CoreWL) kernel \citep{DBLP:conf/ijcai/NikolentzosMLV18}. 
			We employ our WL filtration kernels (FWL) using two different edge weight functions on graphs (see Sect. \textit{Filtration Variants}).} 
			\label{fig:benchmarks}
		\end{figure*}

	\subsection{The Influence of the Filtration Length $k$}
		As central aspect of our approach, the filtration clearly plays a critical role in practical applications.
		In order to investigate the influence of filtrations, we provide experiments for varying choices of $k$, i.e., the filtration length.
		Figure \ref{fig:k_invest} shows results on the EGO datasets for values $k \in \{1,\ldots,10\}$ using the \FPWLdeg{} variant.
		Recall that the case $k=1$ is equivalent to the ordinary Weisfeiler-Lehman approach. 
		Note that already for $k=2$ the predictive performance significantly improves over $k=1$ in all cases and that all datasets reach their accuracy peak at only $k=2$ or $k=3$. 
		App. C 
		shows that the value $k$ linearly influences the kernel complexity.
		Thus, our approach can improve the predictive performance over the ordinary Weisfeiler-Lehman subtree kernel at cost of only a minimal increase in complexity. 
		
		\begin{figure}
            \centering
            \resizebox{0.45\textwidth}{!}{\begin{tikzpicture}
\begin{axis}[
	width=12cm,
	height=7cm,
	xtick={1,...,10},
	ymin=50, ymax=73,
	grid=major,
	grid style={line width=.5pt, draw=gray!20},
	legend pos=north west,
	ylabel={Accuracy in \%},
	ylabel near ticks,
	xlabel={$k$},
	legend style={at={(1,0)},anchor=south east}
]

\addplot[color=baselinecolor, line width=0.25mm] 
table [y error=error] {
	x   y		error    
	1	51.50	1.84
	2	59.85	2.40
	3	63.85	2.84
	4	60.70	2.36
	5	59.60	2.56
	6	59.20	1.46
	7	59.85	1.76
	8	60.25	2.23
	9	59.55	2.41
	10	59.55	2.36
};

\addplot[name path=ego1_top,color=baselinecolor!30, draw=none, forget plot] 
table [y error=error] {
	x   y		error    
	1	49.66	0.00
	2	57.45	0.00
	3	61.01	0.00
	4	58.34	0.00
	5	57.04	0.00
	6	57.74	0.00
	7	58.09	0.00
	8	58.02	0.00
	9	57.14	0.00
	10	57.19	0.00
};

\addplot[name path=ego1_bottom,color=baselinecolor!30, draw=none, forget plot] 
table [y error=error] {
	x   y		error    
	1	53.34	0.00
	2	62.25	0.00
	3	66.69	0.00
	4	63.06	0.00
	5	62.16	0.00
	6	60.66	0.00
	7	61.61	0.00
	8	62.48	0.00
	9	61.96	0.00
	10	61.91	0.00
};

\addplot[baselinecolor!50,fill opacity=0.5, forget plot] fill between[of=ego1_top and ego1_bottom];

\addplot[color=CoreWLcolor] 
table [y error=error] {
	x   y		error    
	1	57.75	1.46
	2	68.50	1.84
	3	63.05	2.64
	4	64.55	1.62
	5	68.50	2.07
	6	65.90	2.40
	7	69.00	2.32
	8	69.05	2.51
	9	66.75	2.85
	10	67.55	2.15
};

\addplot[name path=ego2_top,color=CoreWLcolor!70, draw=none, forget plot] 
table [y error=error] {
	x   y		error    
	1	56.29	0.00
	2	66.66	0.00
	3	60.41	0.00
	4	62.93	0.00
	5	66.43	0.00
	6	63.50	0.00
	7	66.68	0.00
	8	66.54	0.00
	9	63.90	0.00
	10	65.40	0.00
};

\addplot[name path=ego2_bottom,color=CoreWLcolor!70, draw=none, forget plot] 
table [y error=error] {
	x   y		error    
	1	59.21	0.00
	2	70.34	0.00
	3	65.69	0.00
	4	66.17	0.00
	5	70.57	0.00
	6	68.30	0.00
	7	71.32	0.00
	8	71.56	0.00
	9	69.60	0.00
	10	69.70	0.00
};

\addplot[CoreWLcolor!50,fill opacity=0.5, forget plot] fill between[of=ego2_top and ego2_bottom];

\addplot[color=FPWLdegcolor, line width=0.25mm] 
table [y error=error] {
	x   y		error    
	1	58.20	1.77
	2	67.20	1.55
	3	65.75	1.87
	4	65.50	2.57
	5	65.45	2.06
	6	64.60	2.56
	7	62.45	2.49
	8	62.50	1.89
	9	61.95	2.35
	10	63.45	1.98
};

\addplot[name path=ego3_top,color=FPWLdegcolor!30, draw=none, forget plot] 
table [y error=error] {
	x   y		error    
	1	56.43	0.00
	2	65.65	0.00
	3	63.88	0.00
	4	62.93	0.00
	5	63.39	0.00
	6	62.04	0.00
	7	59.96	0.00
	8	60.61	0.00
	9	59.60	0.00
	10	61.47	0.00
};

\addplot[name path=ego3_bottom,color=FPWLdegcolor!30, draw=none, forget plot] 
table [y error=error] {
	x   y		error    
	1	59.97	0.00
	2	68.75	0.00
	3	67.62	0.00
	4	68.07	0.00
	5	67.51	0.00
	6	67.16	0.00
	7	64.94	0.00
	8	64.39	0.00
	9	64.30	0.00
	10	65.43	0.00
};

\addplot[FPWLdegcolor!50,fill opacity=0.5, forget plot] fill between[of=ego3_top and ego3_bottom];

\addplot[color=WLcolor, line width=0.25mm] 
table [y error=error] {
	x   y		error    
	1	55.10	1.70
	2	62.35	1.40
	3	69.60	1.54
	4	66.85	1.94
	5	63.15	2.16
	6	65.80	1.42
	7	63.60	1.24
	8	66.20	2.58
	9	66.25	2.37
	10	65.25	2.32
};

\addplot[name path=ego4_top,color=WLcolor!30, draw=none, forget plot] 
table [y error=error] {
	x   y		error    
	1	53.40	0.00
	2	60.95	0.00
	3	68.06	0.00
	4	64.91	0.00
	5	60.99	0.00
	6	64.38	0.00
	7	62.36	0.00
	8	63.62	0.00
	9	63.88	0.00
	10	62.93	0.00
};

\addplot[name path=ego4_bottom,color=WLcolor!30, draw=none, forget plot] 
table [y error=error] {
	x   y		error    
	1	56.80	0.00
	2	63.75	0.00
	3	71.14	0.00
	4	68.79	0.00
	5	65.31	0.00
	6	67.22	0.00
	7	64.84	0.00
	8	68.78	0.00
	9	68.62	0.00
	10	67.57	0.00
};

\addplot[WLcolor!50,fill opacity=0.5, forget plot] fill between[of=ego4_top and ego4_bottom];

\legend{EGO-1, EGO-2, EGO-3, EGO-4}
\end{axis}
\end{tikzpicture}}
            \captionof{figure}{Classification accuracies and standard deviations of the \FPWLdeg variant for different filtration lengths $k$.}
            \label{fig:k_invest}
        \end{figure}
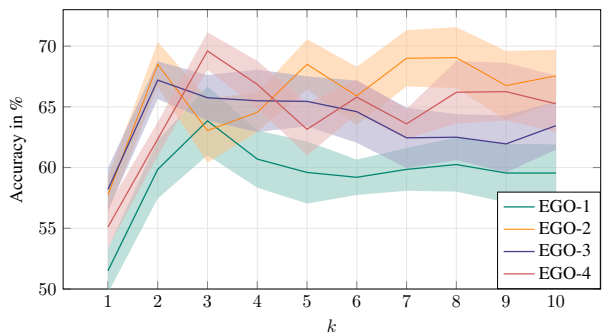%
	

	\subsection{Synthetic Benchmarks: Circular Skip Link Graphs}
		
		In this section, we investigate the discriminate power of filtration graphs kernels. 
		In particular, we consider a benchmark setup as discussed in \citet{DBLP:conf/icml/Murphy0R019} which classifies \emph{circular skip link} (CSL) graphs. 
		A CSL graph $G_{n,s}$ is a $4$-regular graph where the $n$ nodes form a cycle and all pairs of nodes with path distance $s$ on that cycle are furthermore connected by an edge. 
		Examples for CSL graphs are depicted in Fig.~\ref{fig:csl_graphs}.
		Following \citet{DBLP:conf/icml/Murphy0R019}, for $n=41$ and $s \in \{2,3,4,5,6,9,11,12,13,16\}$, graphs are pairwise non-isomorphic. 
		We construct a dataset containing $10$ permuted copies of each such graph.
		The underlying classification task is then to assign graphs to their skip link value $s$ which measures a model's capability to distinguish non-isomorphic graphs. 
		
		
		In this experiment, we consider the \FPWLrw variant and do not limit the number of filtrations $k$ but allow as many filtrations as there are distinct edge weights in the dataset graphs. 
		Thus, the number of filtration graphs is directly governed by $\lambda$, i.e.,  the parameter of $w^\lambda_{rw}~$ which denotes the maximal random walk length. 
		Table~\ref{fig:csl_table} shows the predictive performances for different choices of $\lambda$ (and thus $k$). 
		The case $\lambda=1$ corresponds to the ordinary Weisfeiler-Lehman kernel as $k=1$. 
		Since the Weisfeiler-Lehman method falls short of distinguishing regular graphs, the predictive performance corresponds to that of a random classifier. 
		However, for increasing values of $\lambda$, the number of filtrations $k$ grows as well which results in the kernel's ability to distinguish more and more CSL graphs. 
		Finally, for the case $\lambda=7$, all non-isomorphic graphs can be told apart. 
		It is noteworthy, that these results are not entirely surprising as the considered edge weights provide the filtration kernel with increasing degrees of cyclic information. 
		Nonetheless, the experiments highlight the power of filtration kernels and practically support Corollary \ref{corr:power:kernels}.
        \enlargethispage*{1em}		
		
		\begin{figure}
            \centering
            \resizebox{0.22\textwidth}{!}{\begin{tikzpicture}

	\node (a) at (0,0)
	{
		\begin{tikzpicture}
			\node[text width=0cm] at (-1,1.4) {$G_{11,2}:$};
			\draw[] (0:1) arc (0:360:10mm);
			\foreach \phi in {1,...,11}{
				\node[state,fill=white,scale=0.225] (v_\phi) at (360/11 * \phi:10mm) {};
			}
			\draw[] (v_1) edge[bend left] (v_3);
			\draw[] (v_2) edge[bend left] (v_4);
			\draw[] (v_3) edge[bend left] (v_5);
			\draw[] (v_4) edge[bend left] (v_6);
			\draw[] (v_5) edge[bend left] (v_7);
			\draw[] (v_6) edge[bend left] (v_8);
			\draw[] (v_7) edge[bend left] (v_9);
			\draw[] (v_8) edge[bend left] (v_10);
			\draw[] (v_9) edge[bend left] (v_11);
			\draw[] (v_10) edge[bend left] (v_1);
			\draw[] (v_11) edge[bend left] (v_2);
		\end{tikzpicture}
	};

	\node (b) at (3,0)
	{
		\begin{tikzpicture}
			\node[text width=0cm] at (-1,1.4) {$G_{11,3}:$};
			\draw[] (0:1) arc (0:360:10mm);
			\foreach \phi in {1,...,11}{
				\node[state,fill=white,scale=0.225] (v_\phi) at (360/11 * \phi:10mm) {};
			}
			\draw[] (v_1) edge[bend left] (v_4);
			\draw[] (v_2) edge[bend left] (v_5);
			\draw[] (v_3) edge[bend left] (v_6);
			\draw[] (v_4) edge[bend left] (v_7);
			\draw[] (v_5) edge[bend left] (v_8);
			\draw[] (v_6) edge[bend left] (v_9);
			\draw[] (v_7) edge[bend left] (v_10);
			\draw[] (v_8) edge[bend left] (v_11);
			\draw[] (v_9) edge[bend left] (v_1);
			\draw[] (v_10) edge[bend left] (v_2);
			\draw[] (v_11) edge[bend left] (v_3);
		\end{tikzpicture}
	};

\end{tikzpicture}}
            \captionof{figure}{Circular skip link graphs.}
            \label{fig:csl_graphs}
            \vspace{3mm}
            \resizebox{0.48\textwidth}{!}{
			    \begin{tabular}{c|cccccccccccccc} 
 					$\lambda:k$ & $1:1$ & $2:3$ & $3:4$ & $4:7$ & $5:9$ & $6:14$ & $7:18$ & $8:19$\\ 
 					\hline 
 					Acc. & $10\%~$ & $20\%~$ & $30\%~$ & $50\%~$ & $70\%$ & $90\%$ & $100\%$ & $100\%$ \\ 
 				\end{tabular} }
 			\caption{Classification accuracies of the \FPWLrw variant on CSL graphs.}
 			\label{fig:csl_table}
        \end{figure}

\section{Conclusion}\label{sec:conclusion}

We introduced filtration kernels, a family of graph kernels which compare graphs at different levels of resolution by tracking existence intervals of features over graph filtrations. 
We showed that a particular member of this family enriches the expressive power of the WL subtree kernel.
The implications of this result extend beyond graph kernels, and allow to construct more powerful graph neural networks, as the expressivity of such networks is limited by the WL method.

Empirically, we have demonstrated that our proposed kernel shows comparable or improved predictive performances with respect to other state-of-the-art methods on real-world benchmark datasets.
Given the theoretical insights mentioned above, a particularly interesting research question is concerned with practical performances of filtration-aware graph neural networks.


\subsection*{Acknowledgements}
    This material was produced within the Competence Center for Machine Learning Rhine-Ruhr (\href{https://www.ml2r.de}{\bf ML2R}) which is funded by the Federal Ministry of Education and Research of Germany (grant no. 01|S18038C). The authors gratefully acknowledge this support.
	
	\bibliography{references}
	
	\clearpage
	\appendix


In this appendix, we provide proofs for the results of our main article and give additional details.
The numbering of theorems, definitions, etc. is identical to the main paper.
References to literature refer to the bibliography of the article above.

\section{(A) ~~~ The Filtration Kernel}
\label{apx:linearcombinationkernel}
    In this section, we prove the claims made in Sect.~\textit{Graph Filtration Kernels}. 
    \subsection{Details on the Filtration Kernel}
        \label{apx:filtration_kernel_details}
        In the description of the kernel was mentioned that the base kernels $k_f(G,G')$ are not well-defined on graphs $G,G'$ whenever feature $f$ appears only in $G$ but not $G'$ (or vice-versa). 
        This issue can formally be resolved by introducing an additional dummy entry in the histograms ${\phi}_f(G)$ and  ${\phi}_f(G')$ before normalization.
        We assign this new dummy entry the weight $1$ whenever $f$ does not appear in $G$ and $0$ otherwise.
        Recall that entries in the histograms are associated with values $\alpha_1,\ldots,\alpha_k$. 
        The value $\alpha_\delta$ associated with the dummy entry can be chosen at will.
        We note that the resulting Wasserstein distance $\W_{d^1}(\hat{\phi}_f(G),\hat{\phi}_f(G'))$ over such altered histograms is not necessarily meaningful.
        However, since $f$ does not appear in $G$ and thus $\norm{\phi_f(G)}=0$, the similarity $k_f(G,G')$ is effectively disregarded as the entire term $k_f(G,G') \norm{\phi_f(G)}_1 \norm{\phi_f(G')}_1$ becomes zero.
        In fact, therefore, $k_f(G,G')$ does not need to be computed to begin with. 
        It directly follows that only such features in $\F$ contribute to the similarity $K^\F_{\text{Filt}}(G,G')$ which appear in $G$ \emph{and} $G'$. 
        More formally:
        \begin{lemma}\label{lemma:finitestuff}
            For graphs $G,G'$, a set of features $\F$ and a feature $f' \not \in \F$, it holds that $K^{\F}_{Filt}(G,G') = K^{\F \cup f'}_{Filt}(G,G')$ if $f'$ does not appear in $G$ or $G'$. 
		\end{lemma}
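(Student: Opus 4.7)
The plan is to exploit the multiplicative form of each summand in the filtration kernel definition (Eq.~\ref{eq:filtration_kernel}), namely that each feature $f$ contributes the product $k^\A_f(G,G') \cdot \norm{\phi^\A_f(G)}_1 \cdot \norm{\phi^\A_f(G')}_1$. I would first expand
\begin{equation*}
K^{\F \cup \{f'\}}_{\text{Filt}}(G,G') = K^{\F}_{\text{Filt}}(G,G') + k^\A_{f'}(G,G') \cdot \norm{\phi^\A_{f'}(G)}_1 \cdot \norm{\phi^\A_{f'}(G')}_1,
\end{equation*}
using that $f' \notin \F$ and the sum in the kernel definition ranges over a disjoint union. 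It then suffices to show that the added term vanishes.

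Next, I would invoke Definition~\ref{def:filtration_histogram}: since $\phi^\A_{f'}(G)$ counts the occurrences of $f'$ in each filtration graph of $G$, if $f'$ does not appear in $G$ then every component of $\phi^\A_{f'}(G)$ is zero, hence $\norm{\phi^\A_{f'}(G)}_1 = 0$. Analogously if $f'$ does not appear in $G'$. In either case, the product $\norm{\phi^\A_{f'}(G)}_1 \cdot \norm{\phi^\A_{f'}(G')}_1 = 0$, so the entire added term equals zero regardless of the value of $k^\A_{f'}(G,G')$.

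The only subtlety is the well-definedness of $k^\A_{f'}(G,G')$ when one of the histograms has zero mass (so normalization and the Wasserstein distance are not defined in the standard way). This is precisely what the dummy-entry construction discussed just above the lemma addresses: with the dummy entry, $\hat{\phi}^\A_{f'}(G)$ and $\hat{\phi}^\A_{f'}(G')$ both become well-defined probability vectors of equal mass, making $k^\A_{f'}(G,G')$ formally well-defined (though possibly meaningless). Since this value is immediately annihilated by the zero factor $\norm{\phi^\A_{f'}(G)}_1$, it does not matter what it is; the two kernel values agree. I do not expect any genuine obstacle here—the lemma is essentially a sanity check confirming that only features present in both graphs contribute to the kernel, and the proof is a direct unfolding of the definitions.
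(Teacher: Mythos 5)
Your proposal is correct and follows essentially the same route as the paper: the paper also argues that the summand for $f'$ vanishes because $\norm{\phi^\A_{f'}(G)}_1 = 0$ (or the analogous norm for $G'$), and it likewise uses the dummy-entry construction only to make $k^\A_{f'}(G,G')$ formally well-defined before it is annihilated by the zero factor. No gaps; your expansion of the sum over $\F \cup \{f'\}$ just makes explicit what the paper states in prose.
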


    \subsection{Proof of Theorem \ref{thm:psd}}
        We first prove that the base kernel $k^\A_f$ is positive semi-definite.
        \begin{lemma}
            \label{lemma:base_kernel}
            The base kernel $k^\A_f$ is positive semi-definite.
		\end{lemma}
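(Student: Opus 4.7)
The plan is to reduce the claim to a standard positive-definiteness result for Wasserstein distances whose ground metric is a tree metric. First, I would observe that the normalized filtration histograms $\hat{\phi}^\A_f(G)$ and $\hat{\phi}^\A_f(G')$ are, by construction, probability measures supported on the finite subset $\A_\alpha = \{\alpha_1,\ldots,\alpha_k\} \subset \bbR$. Consequently, $\W_{d^1}(\hat{\phi}^\A_f(G),\hat{\phi}^\A_f(G'))$ is precisely the $1$-Wasserstein distance between two discrete probability measures on the real line under the Euclidean ground metric $d^1$.

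Next, I would invoke the result of \citet{DBLP:conf/nips/LeYFC19} (already referenced in Section \textit{Graph Filtration Kernels}): whenever the ground metric is a tree metric, the associated $1$-Wasserstein distance is conditionally negative definite as a function on pairs of probability measures over the tree's vertex set. Since $(\bbR,d^1)$ is a tree (in fact, a path), this applies directly and yields that the map $(G,G') \mapsto \W_{d^1}(\hat{\phi}^\A_f(G),\hat{\phi}^\A_f(G'))$ is conditionally negative definite and vanishes on the diagonal.

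Finally, I would apply Schoenberg's theorem: if $D$ is a symmetric, conditionally negative definite kernel with $D(x,x)=0$, then $e^{-\gamma D}$ is positive semi-definite for every $\gamma \geq 0$. Instantiating $D$ with the filtration histogram distance above immediately gives that $k^\A_f(G,G') = e^{-\gamma \W_{d^1}(\hat{\phi}^\A_f(G),\hat{\phi}^\A_f(G'))}$ is positive semi-definite.

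The only delicate point is that $\hat{\phi}^\A_f$ must be a genuine probability measure on a common support for every graph $G$, including those in which $f$ does not occur. I would resolve this via the dummy-entry normalization introduced in the preceding subsection: this ensures $\hat{\phi}^\A_f(G)$ is well defined with unit mass for every $G$, so the Wasserstein distance is always well posed and Le et al.'s result applies uniformly. The potential arbitrariness of the dummy value $\alpha_\delta$ is harmless for establishing positive semi-definiteness of $k^\A_f$ itself, and Lemma \ref{lemma:finitestuff} guarantees that it will not affect the final filtration kernel either.
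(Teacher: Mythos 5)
Your proposal is correct and follows essentially the same route as the paper's proof: it establishes conditional negative definiteness of the tree-Wasserstein distance via \citet{DBLP:conf/nips/LeYFC19} (noting that $d^1$ is a tree metric on a path) and then applies Schoenberg's theorem to conclude that the exponentiated negative distance is positive semi-definite. Your additional remark on the dummy-entry normalization is a welcome clarification that the paper handles separately in its discussion of the kernel's well-definedness.
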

        \begin{proof}
            First, it needs to be shown that the Wasserstein distance $\W_{d}(\cdot,\cdot)$ using the 1-dimensional ground distance $d^1$ is \emph{conditionally negative definite} (CND). 
            This can be proven using results in \citet{DBLP:conf/nips/LeYFC19}  who showed that this even holds when the ground distance is defined by a tree metric $d_\mathcal{T}$.
            It is easy to see that $d^1$ is a special case of $d_\mathcal{T}$ as $d^1$ corresponds to a tree metric on a path. 
            According to \citet{schoenberg1938psd}, it holds that $k(x,y)=e^{-\gamma g(x,y)}$ is \emph{positive semi-definite} (PSD) for all $\gamma \in \bbR_+$ if $g$ is CND. 
            Thus, the base kernel $k^\A_f(G,G') = e^{-\gamma \W_{d^1}(\hat{\phi}^\A_f(G),\hat{\phi}^\A_f(G'))}$ is PSD. 
        \end{proof}
        Using this result we can now prove Theorem \ref{thm:psd}.
        \newtheorem*{repeatedpsd}{Theorem~\ref{thm:psd}}
        \begin{repeatedpsd}
            The filtration kernel $K^\F_{\text{Filt}}$ is positive semi-definite.
        \end{repeatedpsd}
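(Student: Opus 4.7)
The plan is to prove positive semi-definiteness by combining three standard kernel closure properties with the preceding lemmas. First, I would invoke Lemma~\ref{lemma:base_kernel}, which has already established that each base kernel $k^\A_f$ is PSD. Next I would observe that the weighting factor $w_f(G,G') = \norm{\phi^\A_f(G)}_1 \norm{\phi^\A_f(G')}_1$ is itself a PSD kernel: it has the form $\langle \varphi_f(G), \varphi_f(G') \rangle$ in $\bbR$ with the one-dimensional feature map $\varphi_f(G) = \norm{\phi^\A_f(G)}_1$, so its Gram matrix on any finite sample is a rank-one outer product of a real vector with itself and hence PSD.

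Given these two ingredients, the plan is to invoke the Schur product theorem: the pointwise (Hadamard) product of two PSD kernels is PSD, so $(G,G') \mapsto k^\A_f(G,G') \, w_f(G,G')$ is PSD for each individual feature $f \in \F$. The filtration kernel $K^{\F,\A}_{\text{Filt}}$ is then a sum of such per-feature PSD kernels, and sums of PSD kernels are PSD.

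The only subtlety I anticipate is that $\F$ may in principle be infinite (for instance, the infinite set $\F_{WL}$), so one has to justify that the sum defining $K^{\F,\A}_{\text{Filt}}$ is well-defined and that PSD-ness passes to the (possibly infinite) sum. This is exactly what Lemma~\ref{lemma:finitestuff} buys us: for any fixed finite set of graphs $G_1,\ldots,G_n$, only the features $f \in \F$ that appear in at least one of the $G_i$ contribute nonzero weight $\norm{\phi^\A_f(G_i)}_1$, and there are only finitely many such features since each $\X_{G_i}$ is finite. Consequently, the $n \times n$ Gram matrix $[K^{\F,\A}_{\text{Filt}}(G_i,G_j)]_{i,j}$ is a \emph{finite} sum of PSD matrices, hence PSD, and this is precisely the definition of $K^{\F,\A}_{\text{Filt}}$ being a PSD kernel.

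Thus the overall proof reduces to three lines: (i) Lemma~\ref{lemma:base_kernel} gives PSD of $k^\A_f$, (ii) the Schur product with the rank-one kernel $w_f$ preserves PSD, and (iii) Lemma~\ref{lemma:finitestuff} reduces the sum over $\F$ to a finite sum on any given Gram matrix, which is then PSD by closure under addition. The only potential pitfall is overlooking the finiteness argument for infinite $\F$; everything else is routine closure under standard kernel operations.
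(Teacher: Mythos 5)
Your proposal is correct and follows essentially the same route as the paper's own proof: Lemma~\ref{lemma:base_kernel} for the base kernels, the observation that the mass product is a trivially PSD rank-one kernel, closure under products and sums, and Lemma~\ref{lemma:finitestuff} to reduce to a finite sum. If anything, your version of the finiteness step (arguing over the union of features present in the whole finite sample, so that the Gram matrix is a single finite sum of PSD matrices) is stated slightly more carefully than the paper's pairwise restriction to features appearing in both graphs.
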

        \begin{proof}
            Let $\tilde{k}_f(G,G') = \norm{\phi_f(G)}_1 \norm{\phi_f(G')}_1$ be the feature frequency product which is trivially PSD, then we can write 
            
                \[ K^\F_{\text{Filt}}(G,G') = \sum_{f \in \F} k_f(G,G') ~ \tilde{k}_f(G,G') ~.\] 
            
            Following, e.g., \citet{DBLP:books/lib/ScholkopfS02}, kernels are closed under (finite) addition and multiplication.
            By Lemma~\ref{lemma:finitestuff}, $K^{\F}_{Filt}(G,G')$ can be written as $K^{\F'}_{Filt}(G,G')$, where $\F'$ is the set of features that appear in both $G$ and $G'$ and which is assumed to be finite (cf. Sect. \textit{Background}).
            Furthermore, as $\F'$ is assumed to be finite and $K^{\F'}_{\text{Filt}}$ is not influenced by any features $f' \not \in \F'$, it follows that $K^{\F}_{\text{Filt}}$ is well defined and PSD on any finite set of graphs.
            In particular, adding a novel graph to an existing set does not change the kernel values of other graphs.
        \end{proof}
        
    \subsection{Proof of Proposition \ref{prop:filtration_to_histogram_kernel}}
        We prove that the filtration kernel is a generalization of the histogram kernel.
		\newtheorem*{repeatedprop}{Proposition~\ref{prop:filtration_to_histogram_kernel}}
		\begin{repeatedprop}
			For filtrations of length $k=1$, the filtration kernel $K^\F_{\text{Filt}}$ reduces to the histogram kernel $K^\F_H$.
		\end{repeatedprop}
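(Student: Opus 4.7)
The plan is to unpack the definition of $K^\F_{\text{Filt}}$ in the degenerate case $k=1$ and show that every base kernel value collapses to $1$, leaving only the product of feature frequencies, which is precisely the histogram kernel of Eq.~\ref{eq:hist_kernel}.

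First I would observe that when $k=1$ the filtration $\A(G)$ consists of the single graph $G$, so the filtration histogram $\phi^\A_f(G) \in \mathbb{R}^1$ is the one-dimensional vector whose only entry is $c(f(G))$, the number of occurrences of $f$ in $G$. Consequently $\|\phi^\A_f(G)\|_1 = c(f(G))$, which recovers the frequency factors that appear in $K^\F_H$.

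Next I would analyze the base kernel $k^\A_f(G,G')$. Assume first that $f$ appears in both $G$ and $G'$, so that no dummy entry from App.~A is needed. Then the mass-normalized histograms $\hat{\phi}^\A_f(G)$ and $\hat{\phi}^\A_f(G')$ are both equal to the one-entry vector $[1]$, and the Wasserstein distance between two identical one-dimensional unit-mass histograms is $\W_{d^1}(\hat{\phi}^\A_f(G),\hat{\phi}^\A_f(G')) = 0$. Therefore $k^\A_f(G,G') = e^{-\gamma \cdot 0} = 1$. Substituting into Eq.~\ref{eq:filtration_kernel} yields
\begin{equation*}
    K^{\F,\A}_{\text{Filt}}(G,G') = \sum_{f \in \F} 1 \cdot c(f(G)) \, c(f(G')) = K^\F_H(G,G').
\end{equation*}

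The only subtle point will be the boundary case where $f$ is absent from at least one of the two graphs. I would invoke Lemma~\ref{lemma:finitestuff} (or equivalently the dummy-entry construction of App.~A): whenever $f$ does not appear in, say, $G$, we have $\|\phi^\A_f(G)\|_1 = 0$, so the whole summand $k^\A_f(G,G') \, \|\phi^\A_f(G)\|_1 \, \|\phi^\A_f(G')\|_1$ vanishes, exactly as the corresponding summand $c(f(G)) \, c(f(G')) = 0$ does on the histogram kernel side. Hence the two sums agree term by term over all $f \in \F$, which completes the reduction. The argument is essentially a direct unfolding of definitions, so no real obstacle arises beyond carefully handling this vanishing-mass edge case.
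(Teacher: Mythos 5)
Your proposal is correct and follows essentially the same route as the paper's proof: both argue that for $k=1$ the normalized single-entry histograms coincide, so every base kernel evaluates to $1$, that $\norm{\phi^\A_f(G)}_1 = c(f(G))$, and that features absent from one graph contribute zero to both sums. Your version merely spells out the Wasserstein-distance computation and the vanishing-mass edge case a bit more explicitly than the paper does.
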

		\begin{proof}
		    If there exists only a single filtration graph (i.e., $k=1$), then $k_f(G,G') = 1$ for all $f$ which appear in both $G$ and $G'$. 
		    (Recall that Sect.~A showed that features which do not appear in both graphs can be disregarded as they do not actively contribute to $K^\F_{\text{Filt}}$.)
		    We know that for $k=1$ it holds that $c(f(G)) = \norm{\phi_f(G)}_1$ where $c(f(G))$ is the number of occurences of features $f$ in $G$.
		    Thus, the filtration kernel reduces to $\sum_{f \in \F} c(f(G)) c(f(G')) = K^\F_H(G,G')$.
		\end{proof}

\section{(B) ~~~ The Product Variant of our Kernel}
\label{apx:productkernel}
In this section, we discuss an alternative variant of filtration kernels. 
This kernel is introduced only for purposes of demonstrating the universality of the filtration kernel concept and should be regarded as a theoretical contribution that is independent of the kernel introduced in Sect. \textit{Graph Filtration Kernels}. 

Filtration kernels can be defined as products over base kernels.
Analogously to the (linear combination based) kernel presented in our article, each base kernel is individually weighted to make up for the loss of information due to the normalization of histograms. 
This is realized by an RBF term measuring the similarity between original histogram masses.
\begin{definition}[Filtration (Product) Kernel]
	Given graphs $G,G'$, a filtration function $\A$, a set of features $\F$, and a parameter $\beta \in \mathbb{R}_{+}$, the \emph{filtration (Product) kernel} is given by 
	\begin{equation}
	\tilde{K}^{\F,\A}_{Filt}(G,G')=\prod_{f \in \F} k^\A_f(G,G') ~ e^{-\beta ~ (\lVert \phi^\A_f(G) \lVert_1 - \lVert \phi^\A_f(G') \lVert_1)^2 }
	\end{equation}
\end{definition}
\begin{theorem}
	\label{thm:product_kernel}
	The kernel $\tilde{K}^{\F,\A}_{Filt}$ is positive semi-definite.
\end{theorem}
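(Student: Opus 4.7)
The plan is to write each factor in the product as a pointwise product of two positive semi-definite kernels and invoke the Schur product theorem, then reduce to a finite product using the dummy-entry convention already introduced for the linear variant.

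First I would split the per-feature factor into
\[
K_f(G,G') \;=\; k^\A_f(G,G') \cdot R_f(G,G'), \qquad R_f(G,G') := e^{-\beta(\lVert \phi^\A_f(G) \rVert_1 - \lVert \phi^\A_f(G') \rVert_1)^2},
\]
so that $\tilde{K}^{\F,\A}_{Filt} = \prod_{f \in \F} K_f$. By Lemma~\ref{lemma:base_kernel}, each $k^\A_f$ is PSD. For $R_f$, observe that $R_f(G,G') = \kappa_\beta(\psi_f(G),\psi_f(G'))$, where $\psi_f(G) := \lVert \phi^\A_f(G)\rVert_1 \in \mathbb{R}$ and $\kappa_\beta(a,b) = e^{-\beta (a-b)^2}$ is the one-dimensional Gaussian RBF kernel, which is PSD by Schoenberg's theorem (the squared Euclidean distance on $\mathbb{R}$ is CND). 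Pulling $\kappa_\beta$ back through the map $\psi_f$ preserves positive semi-definiteness, so $R_f$ is PSD. By the Schur product theorem, $K_f = k^\A_f \cdot R_f$ is then PSD.

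Next I would argue that the product over $\F$ is actually a finite product on any fixed pair of graphs. Using the dummy-entry construction of Sect.~A: if $f$ appears in neither $G$ nor $G'$, then the augmented normalized histograms $\hat{\phi}^\A_f(G)$ and $\hat{\phi}^\A_f(G')$ are both the unit mass on the dummy coordinate, hence $\W_{d^1}(\hat{\phi}^\A_f(G),\hat{\phi}^\A_f(G')) = 0$ and $k_f(G,G') = 1$; simultaneously $\lVert\phi^\A_f(G)\rVert_1 = \lVert\phi^\A_f(G')\rVert_1 = 0$, so $R_f(G,G') = 1$. Thus $K_f(G,G') = 1$ for every $f$ outside the finite set $\F_{G,G'}$ of features appearing in at least one of $G,G'$. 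The analogue of Lemma~\ref{lemma:finitestuff} then gives $\tilde{K}^{\F,\A}_{Filt}(G,G') = \tilde{K}^{\F_{G,G'},\A}_{Filt}(G,G')$, and on any finite set of graphs one may replace $\F$ by a common finite superset of all such $\F_{G,G'}$ without changing kernel values.

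Finally, since PSD kernels are closed under finite products (Schur product theorem applied inductively), the finite product $\prod_{f} K_f$ over this common feature set is PSD on the given finite graph collection, which establishes that $\tilde{K}^{\F,\A}_{Filt}$ is PSD as claimed. The main obstacle, as in the linear case, is the handling of features appearing in only one of $G,G'$: here $k^\A_f$ is only well defined via the dummy coordinate, and unlike the linear-combination kernel one cannot cancel such terms by multiplying them by zero mass. The key observation that unblocks the argument is that multiplicativity still lets us absorb them, because the Wasserstein distance on the dummy-augmented histograms and the RBF factor together yield a well-defined, PSD contribution; the non-contributing (absent from both graphs) factors evaluate to exactly $1$, guaranteeing convergence of the product.
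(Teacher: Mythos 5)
Your proof is correct and follows essentially the same route as the paper's (sketched) argument: positive semi-definiteness of the base kernel $k^\A_f$ via Lemma~\ref{lemma:base_kernel}, positive semi-definiteness of the RBF factor, closure of kernels under pointwise products, and the observation that features absent from both graphs contribute a factor of exactly $1$ so the product is effectively finite. You fill in the details more carefully than the paper's sketch --- in particular the explicit treatment of features appearing in only one graph via the dummy coordinate --- but the decomposition and key lemmas are identical.
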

\begin{proof} \textit{(Sketch.)}
	Lemma \ref{lemma:base_kernel} shows that the base kernel $k_f$ is positive semi-defininte (PSD). 
	Furthermore, the RBF term $e^{-\beta (x-y)^2 }$ with $x,y \in \bbR$ and $\beta \in \bbR_+$ is known to be PSD.
	The rest of the prove is to a large extend analog to that of Theorem \ref{thm:product_kernel} found in App.~A. 
	One difference, however, is that whereas the linear combination kernel $K^\F_{\text{Filt}}(G,G')$ depends only on features $f \in \F$ which appear in both graphs $G$ and $G'$, the product kernel variant $\tilde{K}^\F_{\text{Filt}}(G,G')$ is affected by features $f \in \F$ which appear in $G$ or $G'$. 
	Thus, the key difference between the two variants is the case where $G$ has a feature $f$ which $G'$ does not have. 
	Nonetheless, the number of features that appear in $G$ or $G'$ remains finite. 
	Furthermore, the kernel value is not changed by any feature $f'$ which does not appear in either graph since in this case $k_{f'}(G,G')=1$ and $e^{-\beta ~ (\lVert \phi_{f'}(G) \lVert_1 - \lVert \phi_{f'}(G') \lVert_1)^2} = 1$. 
	It follows that $\tilde{K}^{\F}_{Filt}(G,G')$ is PSD. 
\end{proof}

\begin{proposition}
	For filtrations $\A$ of length $k=1$, the kernel $\tilde{K}^{\F,\A}_{Filt}(G,G')$ reduces to the RBF kernel $e^{-\beta \norm{\varphi(G) - \varphi(G')}^2}$ where $\varphi(G)$ and $\varphi(G')$ denote the feature vectors of $G$, resp. $G'$ (c.f. Eq. \ref{eq:feature_vector}).
\end{proposition}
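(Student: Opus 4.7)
The plan is to evaluate each factor of the product $\tilde{K}^{\F,\A}_{\text{Filt}}$ in the degenerate case $k=1$ and check that the product collapses into a single exponential whose exponent is exactly $-\beta \lVert \varphi(G)-\varphi(G') \rVert^2$.

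First I would examine the base kernel $k^\A_f(G,G')$ when $k=1$. For any feature $f$ appearing in both $G$ and $G'$, the unnormalized filtration histograms are just the scalars $c(f(G))$ and $c(f(G'))$, and after normalization they both equal the one-dimensional distribution $[1]$; thus $\W_{d^1}(\hat\phi_f(G),\hat\phi_f(G'))=0$ and $k^\A_f(G,G')=1$. For a feature $f$ appearing in neither graph, the Wasserstein term is trivially $0$ and again $k^\A_f=1$. The subtle case is when $f$ appears in only one graph, for which Sect.~A of the appendix introduces a dummy entry with an arbitrary associated value $\alpha_\delta$; since $k=1$ leaves us only the single filtration value $\alpha_1$, I would fix $\alpha_\delta := \alpha_1$, which makes the ground distance between the only two active histogram bins vanish and forces $k^\A_f(G,G')=1$ in this case too.

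Next I would rewrite the product term. With $k=1$, $\lVert \phi^\A_f(G)\rVert_1 = c(f(G))$ for every $f$, so each factor reduces to the RBF-style exponential $\exp\!\bigl(-\beta\,(c(f(G))-c(f(G')))^2\bigr)$. Using $k^\A_f=1$ everywhere and the multiplicativity of the exponential, the full product becomes
\begin{equation*}
\tilde{K}^{\F,\A}_{\text{Filt}}(G,G') \;=\; \exp\!\Bigl(-\beta \sum_{f\in\F}\bigl(c(f(G))-c(f(G'))\bigr)^2\Bigr).
\end{equation*}
Recognizing the sum in the exponent as $\lVert \varphi(G)-\varphi(G')\rVert^2$ by the definition of the feature vector in Eq.~\ref{eq:feature_vector} finishes the identification with the RBF kernel.

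The one place that needs care is well-definedness of the sum over the (possibly infinite) feature domain $\F$. I would handle this exactly as in the proof of Theorem~\ref{thm:product_kernel}: any feature $f'$ absent from both $G$ and $G'$ contributes $c(f'(G))-c(f'(G'))=0$ to the exponent (and $k^\A_{f'}=1$), hence does not affect the product, so the sum effectively ranges over the finite set of features appearing in $G$ or $G'$. This, together with the dummy-entry convention above, is the only real obstacle; everything else is direct substitution.
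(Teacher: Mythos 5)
Your proof is correct and follows essentially the same route as the paper's: establish that $k^\A_f(G,G')=1$ for all $f$ when $k=1$, then collapse the product of RBF weight factors into a single exponential over the sum of squared feature-count differences, which is $\lVert\varphi(G)-\varphi(G')\rVert^2$. Your treatment of features appearing in only one of the two graphs (fixing the dummy value $\alpha_\delta=\alpha_1$ so the Wasserstein term still vanishes) is in fact slightly more careful than the paper's proof, which simply normalizes the length-one histograms without a dummy entry and glosses over the zero-mass case.
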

\begin{proof}
	In case $k=1$, there exists only a single entry in the filtration histograms. 
	By mass-normalizing histograms without adding a dummy element, all mass-normalized filtration histograms are trivially equivalent.
	It follows that $k^\A_f(G,G')=1$ for all $f \in \F$.
	Thus, the kernel reduces to the following:
	\begin{eqnarray*}
		\tilde{K}^{\F}_{Filt}(G,G')
		& = & \prod_{f \in \F} e^{-\beta ~ (\lVert \phi_f(G) \lVert_1 - \lVert \phi_f(G') \lVert_1)^2 } \\
		& = & \prod_{f \in \F} e^{-\beta ~ (c(f(G)) - c(f(G')))^2 } \\
		& = & e^{-\beta ~ \sum_{f \in \F} (c(f(G)) - c(f(G')))^2 } \\
		& = & e^{-\beta \norm{\varphi(G) - \varphi(G')}^2} \\
	\end{eqnarray*}
	
\end{proof}

\section{(C) ~~~ Complexity Analysis}
\label{apx:kernel:complexity}
We now discuss the complexity of the Weisfeiler-Lehman filtration kernel defined in Section~\textit{Graph Filtration Kernels}.
The proof for the product kernel $\tilde{K}^{\F,\A}_{Filt}$ defined above is analogous.
\newtheorem*{repeatedcomplex}{Theorem~\ref{thm:complexity}}
\begin{repeatedcomplex}
	The Weisfeiler-Lehman filtration kernel $K^{\F_{WL}}_{\text{Filt}}(G,G')$ on graphs $G,G'$ can be computed in time $O(hkm)$.
\end{repeatedcomplex}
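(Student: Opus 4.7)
The plan is to decompose the cost into three parts: (i) computing the Weisfeiler--Lehman labels on all filtration graphs of $G$ and $G'$, (ii) assembling the resulting filtration histograms, and (iii) evaluating the base kernels and the final weighted sum. I expect the overall $O(hkm)$ bound to fall out by showing that each of the three stages individually fits into this budget.

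For (i), I would run the WL procedure from scratch on each of the $k$ filtration graphs $G_i$ of each input, up to depth $h$. A standard implementation (as in \citet{DBLP:journals/jmlr/ShervashidzeSLMB11}) performs one WL refinement step on a graph with $n$ vertices and $m_i \le m$ edges in time $O(n + m_i)$, using radix/bucket sorting to hash neighbor multisets to fresh labels. Summing over $h$ iterations and $k$ filtration graphs gives total time $O\bigl(hk(n+m)\bigr) = O(hkm)$ after the standard assumption that $n = O(m)$ (isolated vertices can be handled separately in linear preprocessing, since they contribute trivially to every filtration graph).

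For (ii), the WL computation of stage (i) emits at most $hkn$ (graph, filtration index, iteration, vertex) quadruples, each producing one label occurrence. Inserting these into a global hash table keyed by WL label yields, for each feature $f \in \F_{WL}$ appearing in $G$ or $G'$, a length-$k$ filtration histogram $\phi^\A_f$ in total time $O(hkn)$. The number of distinct WL features across both graphs is bounded by $O(hn)$. For stage (iii), by Lemma~\ref{lemma:finitestuff} only features appearing in both graphs contribute, so it suffices to iterate over the $O(hn)$ relevant features; for each one, the base kernel $k_f^\A(G,G')$ reduces to an exponential of a $1$-dimensional Wasserstein distance between two length-$k$ histograms on the real line, which by Rem.~2.30 of \citet{compOT} is computable in $O(k)$ time. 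Thus stage (iii) also costs $O(hkn) = O(hkm)$, and the final summation in Eq.~\ref{eq:filtration_kernel} adds only $O(hn)$ more work.

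The only nontrivial point, and the one I would be most careful about, is the bookkeeping in (ii): the WL labels are produced independently on each filtration graph (the hash of a node's neighborhood changes as edges are added), so one must use a single, consistent hash function across all filtration graphs of both $G$ and $G'$ to ensure that identical features yield identical keys. Once this is in place, aligning the two length-$k$ filtration histograms for a common feature and invoking the linear-time $1$-D Wasserstein routine is immediate, and the three stages combine to give the claimed $O(hkm)$ bound.
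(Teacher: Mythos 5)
Your overall strategy—run WL to depth $h$ on each of the $k$ filtration graphs, then evaluate closed-form one-dimensional Wasserstein distances—is the same as the paper's, and stages (i) and (ii) are correct. The gap is in stage (iii), specifically the claim that the number of distinct WL features across both graphs is $O(hn)$. That bound holds for WL on a single graph, but in the filtration setting a vertex can receive a \emph{different} label in each filtration graph (its neighborhood changes as edges are added), so the number of distinct labels appearing in some filtration graph of $G$ or $G'$ is $O(hkn)$, not $O(hn)$; restricting to mutual features does not remove the factor $k$ (consider $G=G'$). Once the count is corrected, your accounting of $O(k)$ per feature for a dense length-$k$ Wasserstein computation gives $O(hk^2n)$, which exceeds the claimed budget.

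The repair is an amortized argument, which is what the paper uses: the closed-form $1$-D Wasserstein distance can be evaluated in time linear in the number of \emph{non-zero} entries of the two histograms (stored sparsely), and since every (vertex, iteration, filtration index) triple contributes exactly one label occurrence, the total number of non-zero histogram entries summed over all mutual features $f \in \F'$ is $O(hkn) = O(hkm)$. Charging each Wasserstein evaluation to the non-zero entries it touches then bounds stage (iii) by $O(hkm)$ without ever multiplying a distinct-feature count by $k$. You already have the needed ingredient in your stage (ii) observation that only $O(hkn)$ label occurrences are emitted in total; you just need to carry that sparse accounting through to stage (iii) instead of switching to a dense per-feature cost.
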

\begin{proof}
	The ordinary Weisfeiler-Lehman subtree kernel $K_{WL}(G,G')$ can be computed in time $O(hm)$ where $h$ is the depth parameter and $m$ is the number of edges in $G$ and $G'$. 
	The Weisfeiler-Lehman filtration kernel needs to perform the WL feature extraction step a total of $k$ times; once for every filtration graph. 
	Thus, the number of WL labels over all filtrations is bounded in $O(hkm)$. 
	It remains to be shown that the base kernel calculations $k_f(G,G')$ over all mutual features $f \in \F'$ has a total complexity of $O(hkm)$. 
	Note that the Wasserstein distance using the ground distance $d^1$ has a closed form \citep{DBLP:conf/nips/LeYFC19}. 
	Thus, for two histograms $\hat{\phi}_f(G)$ and $\hat{\phi}_f(G')$ the distance $W_{d^1}(\hat{\phi}_f(G), \hat{\phi}_f(G'))$ can be calculated in time linear in the number of non-zero entries of both histograms. 
	Since for graphs $G$ and $G'$ the total number of non-zero histograms entries over all features $f \in \F'$ is bounded in $O(hkm)$, it follows that the set of all $k_f(G,G'), f \in \F'$ must be computable in $O(hkm)$ as well. 
	%
\end{proof}

\section{(D) ~~~ The Expressive Power of Weisfeiler-Lehman Filtration Sequences}
\label{apx:kernel:power}

    In this section we will provide proof that Weisfeiler-Lehman (WL) labels over filtration sequences can be strictly more powerful than WL labels alone. 
    Note that by Weisfeiler-Lehman labels, we mean \emph{1-dimensional} WL labels (consistently with our usage in the main paper) and that the vertices of the given graphs are allowed to be labeled with discrete labels.
    In particular:
    
    \newtheorem*{repeatedpower}{Theorem~\ref{thm:power}}
    \begin{repeatedpower}
    There exists a filtration function $\A$ such that $\phi^\A_f(G)=\phi^\A_f(G')$ for all $f \in \F_{WL}$ if and only if $G$ and $G'$ are isomorphic.
    \end{repeatedpower}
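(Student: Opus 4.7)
The backward direction follows immediately from the isomorphism-equivariance of $\A$ stated after Eq.~\ref{eq:weights}: $G \cong G'$ implies $\A(G) \equiv \A(G')$, so $\phi^\A_f(G) = \phi^\A_f(G')$ for every $f \in \F_{WL}$.

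For the forward direction I plan to construct an explicit $\A^*$ via a canonical labeling oracle. Fix a routine $\mathrm{can}(\cdot)$ that maps each graph to a unique isomorphism-class representative together with a total order on its edges (e.g., via nauty). Given $G$ with edges listed in canonical order as $e_1, \ldots, e_m$, set $w(e_i) = m - i + 1$, producing a length-$m$ filtration $G_1 \subseteq \cdots \subseteq G_m = G$ where $G_i$ is the canonical prefix on the first $i$ edges. By construction $\A^*$ is isomorphism-equivariant and the sequence $(G_1, \ldots, G_m)$ encodes $\mathrm{can}(G)$; matching $\phi^{\A^*}_f(G) = \phi^{\A^*}_f(G')$ for all $f \in \F_{WL}$ then amounts to matching the WL-label multiset of $G_i$ and $G_i'$ at every step. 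The goal is to show this forces $\mathrm{can}(G) = \mathrm{can}(G')$, hence $G \cong G'$. Specializing to $h=1$ on unlabeled graphs (where WL labels reduce to degrees) would establish the parenthetical remark that depth $1$ already suffices.

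The main obstacle is that matching WL multisets at a single step does not suffice to force $G_i \cong G_i'$: two non-isomorphic one-edge extensions of a shared $G_{i-1}$ can share a degree multiset — for example, extending $P_3 \cup K_2$ by a single edge yields either $\triangle \cup K_2$ or $P_5$, both with degree multiset $\{1,1,2,2,2\}$. The argument must therefore use joint information across all $m$ steps to rule out such collisions at the level of full canonical-form transcripts. To sidestep this combinatorial subtlety cleanly, I would strengthen the weighting so that $w(e)$ additionally encodes a complete invariant of the pointed graph $(G, e)$, or even more directly a hash of $(\mathrm{can}(G), \pi(e))$ where $\pi(e)$ is the canonical index of $e$. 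With such an oracle-level weighting, each filtration graph carries enough canonical-form information that the matching-histogram condition already implies $\mathrm{can}(G) = \mathrm{can}(G')$. Since the theorem asserts only existence of $\A^*$ and explicitly waives efficient computability (as emphasized after Corollary~\ref{corr:power:kernels}), this oracle construction is admissible, and the proof reduces to the tautology that matching canonical forms implies isomorphism.
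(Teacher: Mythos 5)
Your high-level strategy (edge weights derived from a canonical form) matches the paper's, and you correctly diagnose why the naive prefix filtration $w(e_i)=m-i+1$ is insufficient: for two non-isomorphic graphs with equal $n$ and $m$, every pair of filtration graphs $G_i, G_i'$ has the same edge count, so you would need WL itself to separate $G_i$ from $G_i'$ at some step, which is exactly what cannot be assumed. However, your repair is where the gap lies. You assert that with weights $\mathrm{hash}(\mathrm{can}(G),\pi(e))$ ``each filtration graph carries enough canonical-form information that the matching-histogram condition already implies $\mathrm{can}(G)=\mathrm{can}(G')$,'' but this is precisely the step that needs an argument and does not follow as stated: the WL labels are computed on the \emph{unweighted} filtration graphs, so no filtration graph ``carries'' the hash values at all. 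The only information visible to the histograms $\phi^\A_f$ is which structural features appear at which threshold, and you never show that differing weight sets force a detectable difference at some threshold. (There is also a secondary issue: the thresholds $\alpha_1 \geq \dots \geq \alpha_k$ must be fixed uniformly across the graph class, which your hash construction does not address.)

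The paper closes exactly this gap with two ingredients you are missing. First, it sets $w(e)$ to the \emph{position} of $e$ in the flattened upper triangle of the adjacency matrix of $c(G)$, so that the set of occupied weight values in $[\binom{n}{2}]$ encodes $c(G)$ and weight values are directly comparable between graphs; the thresholds are then just the natural numbers up to $\binom{n}{2}$. Second, and crucially, it proves a lemma stating that two graphs with equal vertex counts but different edge counts already have different iteration-1 WL label multisets (a degree-sum argument), and then observes that at the \emph{largest} index $x$ where $A(c(G))$ and $A(c(G'))$ disagree, the filtration graphs $G_x$ and $G_x'$ agree on all higher-indexed edges and hence differ in edge count by exactly one. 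This is what converts ``canonical forms differ'' into ``some filtration histogram differs,'' using only depth-1 WL. Your construction can likely be completed by the same maximal-disagreement-plus-edge-count argument, but as written the proposal replaces the theorem's actual content with an unproven assertion.
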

    
    The proof of this theorem is based on the fact that isomorphism between graphs and canonicalization of graphs are closely related concepts.
    A \emph{canonical form} for a class of graphs $\mathcal{G}$ is a function $c: \mathcal{G} \to \mathcal{G}$ that assigns each $G \in \mathcal{G}$ a unique representative $c(G)$ of its (equivalence) class of graphs that are isomorphic to $G$.
    Correspondingly, a \emph{canonical ordering} $o_{c}$ of $G \in \mathcal{G}$ assigns vertices of $G$ the index of their images in $c(G)$ under some isomorphism from $G$ to $c(G)$.
    Hence a check for isomorphism between $G$ and $G'$ can be done by simply comparing $c(G)$ and $c(G')$ for equality and $c(G)$ can be obtained from $G$ by permuting its vertices according to the canonical ordering $o_c$.
    
    One folklore example of an algorithm that computes a canonical form is to fix some total order on the adjacency matrices\footnote{
        For graphs without vertex labels, the standard definition of adjacency matrices as binary matrices can be used. 
        For simple labeled graphs, we can define an adjacency matrix in this context as the $n \times n$ matrix $A$ that -- for a given ordering $v_1, v_2, \ldots, v_n$ of vertices of $G$ -- contains the label of $v_i$ at $A_{i,i}$ and a one at $A_{i,j}$ and $A_{j,i}$ if $\{ v_i, v_j \} \in E(G)$.
    } 
    of graphs in $\mathcal{G}$ (e.g. a lexicographic order over the flattened upper triangle of adjacency matrices) and by mapping $G \in \mathcal{G}$ to the smallest permutation of its adjacency matrix with respect to the total order.
    This algorithm can be applied to all graphs and is NP-complete in general \citep{DBLP:conf/stoc/BabaiL83}.
    However, efficient algorithms are known for several graph classes, e.g., polynomial time algorithms for graphs of bounded-treewidth \citep{DBLP:journals/eccc/Wagner11}.
    
    Any canonical ordering can be used to define edge weights that allow to obtain graph filtrations $\A(G)$ and $\A(G')$ which allow to exactly solve the subgraph isomorphism problem using Weisfeiler-Lehman labels alone, as we will now show.
    To this end, we note that the Weisfeiler-Lehman isomorphism test can always distinguish two graphs with identical vertex count but different numbers of edges.
    
    \begin{lemma}\label{lem:d1wl}
    Let $G, H$ be two graphs with $\abs{V(G)} = \abs{V(G')}$ but $\abs{E(G)} \neq \abs{E(G')}$.
    Then the sets of iteration-1 Weisfeiler-Lehman labels of $G$ and $G'$ differ.
    \end{lemma}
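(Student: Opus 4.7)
The plan is to argue that the multiset of iteration-$1$ Weisfeiler-Lehman labels of any graph $H$ already encodes the degree multiset of $H$; from this and the handshake identity the lemma follows immediately by contrapositive. I read ``sets of WL labels'' in the multiset sense used for the WL isomorphism test earlier in the paper (the set-interpretation would make the statement false, as two graphs may share the same support of degrees while differing in multiplicities and hence in edge count).

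First I would recall the definition
\[
\ell_1(v) \;=\; f_\#\bigl(\ell_0(v),\, [\,\ell_0(u) : u \in \N(v)\,]\bigr),
\]
where $f_\#$ is a perfect, hence injective, hash. Injectivity lets us invert $f_\#$ and read off from $\ell_1(v)$ the list $[\ell_0(u) : u \in \N(v)]$, whose length is exactly $\deg_H(v)$. Therefore the multiset $\mso \ell_1(v) : v \in V(H) \msc$ determines the degree multiset of $H$, and by the handshake identity,
\[
\sum_{v \in V(H)} \deg_H(v) \;=\; 2\,\abs{E(H)}.
\]

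Now suppose for contradiction that the iteration-$1$ WL label multisets of $G$ and $G'$ coincide. By the preceding observation, $G$ and $G'$ then share the same degree multiset, which forces $2\,\abs{E(G)} = 2\,\abs{E(G')}$, contradicting the hypothesis $\abs{E(G)} \neq \abs{E(G')}$. The claim follows. No substantive obstacle arises: the whole argument is a one-line consequence of the fact that a single round of WL refinement already sees each vertex's degree via the length of the neighbor-label multiset embedded in its hash.
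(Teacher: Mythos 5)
Your proof is correct and follows essentially the same route as the paper's: observe that an iteration-1 WL label determines the vertex degree (the paper states this directly; you derive it from injectivity of the perfect hash), then apply the handshake identity to turn equal degree multisets into equal edge counts and obtain a contradiction. Your side remark that the statement must be read with multisets rather than sets is a fair clarification that the paper's own proof implicitly assumes as well.
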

    
    \begin{proof}
    Note that the iteration-1 Weisfeiler-Lehman labels correspond to vertex degrees in the unlabeled case.
    More generally, it holds that two vertices (even in the case that graphs are labeled) with different numbers of neighbors cannot have identical 1-dimensional iteration-1 Weisfeiler-Lehman labels.
    Now suppose for contradiction that $G$ and $G'$ have the same sets of 1-dimensional iteration-1 Weisfeiler-Lehman labels. 
    Using the above insight, this implies 
    \[  \sum_{v\in V(G)} \delta(v) =  \sum_{w\in V(G')} \delta(w) \ , \]
    where $\delta(v)$ is the degree of node $v$. But
    \[  \sum_{v\in V(G)} \delta(v) = 2 \abs{E(G)} \neq 2 \abs{E(G')} = \sum_{w\in V(G')} \delta(w)  \ ,\]
    which is a contradiction.
    \end{proof}

    Using Lemma~\ref{lem:d1wl}, we can now prove Theorem~\ref{thm:power}.
    
    \begin{proof}[Proof of Theorem~\ref{thm:power}]
    Let $\mathcal{G}$ be a class of graphs and let $c$ be a canonical form for $\mathcal{G}$.
    For $G \in \mathcal{G}$, we define an edge weight function $w: E(G) \to \mathbb{R}_{+}$ using the corresponding canonical ordering $o_c$ of $G$.
    To this end, let $A(c(G))$ be the flattened upper triangle of the adjacency matrix of $c(G)$.
    Then, for $e = \{v,w\} \in E(G)$, we set $w(e)$ to the index of $\{o_c(v), o_c(w)\}$ in $A(c(G))$.
    This weight function then maps each edge to an integer weight larger than zero.
    See Fig.~\ref{fig:canonical_filtration} for an example of this weight function.
    
    \begin{figure}[t]
		\centering
		\resizebox{0.5\textwidth}{!}{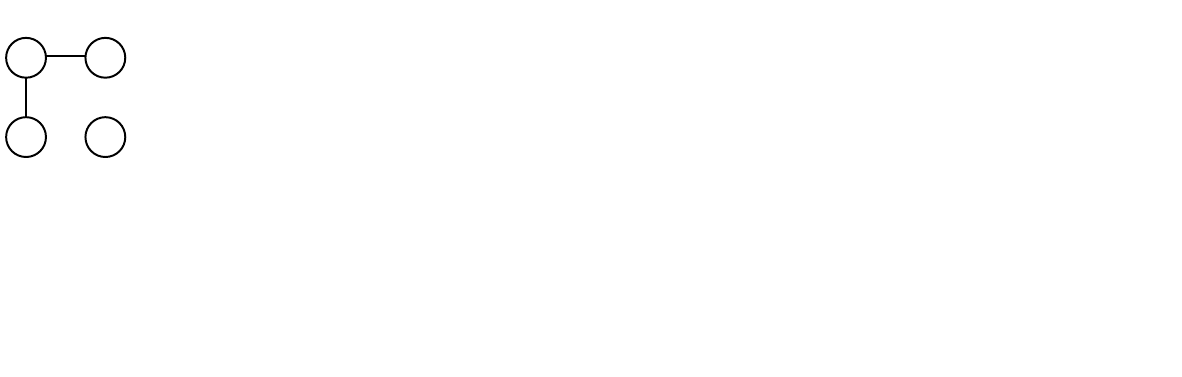}
		\caption{Graphs $G$ and $G'$ are assigned edge weights by a canonicalization process. The filtration graphs $G_9 = (V(G),\{e \in E(G): w(e) \geq 9\})$ and $G'_9 = (V(G'),\{e \in E(G'): w(e) \geq 9\})$ can be distinguished using WL labels as $G_9$ has an edge and $G'_9$ does not. Thus, graphs $G$ and $G'$ are not isomorphic.}
		\label{fig:canonical_filtration}
	\end{figure}
    
    If $G$ and $G'$ have different numbers of vertices, then the sets of WL labels of even $G$ and $G'$ have different cardinalities and the claim is trivial.
    Given two graphs $G, G'$ on $n$ vertices with edge weights defined as above, we can use the natural numbers as cutoff values, resulting in filtrations $\A(G), \A(G')$ of length ${n \choose 2} + n$.
    Now, if $G$ and $G'$ are isomorphic, then for each $i\in [{n \choose 2}]$ we have that the filtration graphs $G_i$ and $G'_i$ are isomorphic. 
    Hence the sets of WL labels of $G_i$ and $G'_i$ will be identical for any WL iteration.
    
    Now consider the case that $G$ and $G'$ are not isomorphic.
    Then it must be the case that their canonical forms are not equal, i.e., $c(G) \neq c(G')$.
    As a result, the adjacency matrices of $c(G)$ and $c(G')$ are not equal and there must be a \emph{largest} index $x$ where the entries of $A(c(G))$ and $A(c(G'))$ differ.
    Hence, $G_x$ and $G'_x$ are not isomorphic (recall, that $G_x = (V(G),\{e \in E(G):~ w(e) \geq x\})$), as they have different numbers of edges.
    As a result of Lemma~\ref{lem:d1wl}, the sets of iteration-1 WL labels of $G_x$ and $G'_x$ differ, which concludes the proof.
    \end{proof}
    
    Note that the construction in the proof of Theorem~\ref{thm:power} shows how to obtain a filtration function from a canonical form.
    While this result is only of theoretical interest 
    it shows that WL labels on top of filtrations $\A(G), \A(G')$ are strictly more powerful than WL labels on the original graphs $G, G'$.
    In particular, it shows that filtrations and WL labels are ``compatible'' in the sense that the information provided by a graph filtration may be used beneficially by the WL algorithm.
    Together with several conceivable efficiently computable filtrations (such as the example mentioned in Sect.~\textit{On the Expressive Power of Weisfeiler-Lehman Filtration Kernels} or the canonical form for bounded-treewidth graphs \cite{DBLP:journals/eccc/Wagner11}) this supports our claim that our approach can be beneficial in practice.
    
    We now prove the two corollaries on the implications of Theorem~\ref{thm:power} on the expressive power of our proposed kernels and on graph neural networks (GNNs).

    \newtheorem*{repeatedcorrkernels}{Corollary~\ref{corr:power:kernels}}
    \begin{repeatedcorrkernels}
        There exists a filtration function $\A$ such that the kernel $K^{\F_{WL},\A}_{\text{Filt}}$ is \emph{complete}.
    \end{repeatedcorrkernels}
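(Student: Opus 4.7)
The plan is to leverage Theorem~\ref{thm:power} directly: take $\A$ to be the filtration function it guarantees and establish completeness of $K^{\F_{WL},\A}_{\text{Filt}}$ by separately verifying the two directions of $\varphi(G)=\varphi(G') \Leftrightarrow G \cong G'$, where $\varphi$ denotes the canonical feature map of the kernel.

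The key structural observation is that $K^{\F_{WL},\A}_{\text{Filt}}$ is a (finite, by Lemma~\ref{lemma:finitestuff}) sum of PSD kernels of the form $\norm{\phi^\A_f(G)}_1 \norm{\phi^\A_f(G')}_1 k^\A_f(G,G')$. Each such summand has feature map $\psi_f(G) = \norm{\phi^\A_f(G)}_1 \cdot \varphi_{k^\A_f}(G)$, where $\varphi_{k^\A_f}$ is the feature map of the base kernel $k^\A_f$; hence $\varphi(G) = \bigoplus_{f \in \F_{WL}} \psi_f(G)$ in the direct sum of the associated RKHSs. The easy direction is then immediate: if $G \cong G'$, the edge-weighted filtrations coincide up to relabeling (cf.\ the remark below Eq.~\ref{eq:weights}), so $\phi^\A_f(G) = \phi^\A_f(G')$ for every $f$, and consequently $\varphi(G) = \varphi(G')$.

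For the harder direction I would combine the above decomposition with Theorem~\ref{thm:power} by showing that $\varphi(G) = \varphi(G')$ forces $\phi^\A_f(G) = \phi^\A_f(G')$ for all $f \in \F_{WL}$. Equality in the direct sum gives $\psi_f(G) = \psi_f(G')$ for every $f$. Since $k^\A_f(G,G) = e^0 = 1$, one reads off $\norm{\psi_f(G)}^2 = \norm{\phi^\A_f(G)}_1^2$, so the two histogram masses agree. If both masses vanish, the histograms are trivially equal; otherwise, dividing by the common positive mass yields $\varphi_{k^\A_f}(G) = \varphi_{k^\A_f}(G')$, which gives $k^\A_f(G,G') = 1$ and therefore $\W_{d^1}(\hat{\phi}^\A_f(G), \hat{\phi}^\A_f(G')) = 0$. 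Because $d^1$ is a metric on $\A_\alpha$, the Wasserstein distance is a metric on normalized histograms, so $\hat{\phi}^\A_f(G) = \hat{\phi}^\A_f(G')$; multiplying by the shared mass recovers $\phi^\A_f(G) = \phi^\A_f(G')$. Applying Theorem~\ref{thm:power} then yields $G \cong G'$.

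The main subtlety to handle carefully is the interaction between mass-normalization and the dummy-entry convention of App.~A, which exists precisely to make $k^\A_f$ well-defined when $f$ appears in only one of the two graphs. The argument above sidesteps this asymmetric situation, since $\varphi(G)=\varphi(G')$ already forces the $f$-masses to coincide, so the dummy entry never influences the conclusion. The remaining bookkeeping is routine.
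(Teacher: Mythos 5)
Your proposal is correct and follows essentially the same route as the paper's proof: both rely on the direct-sum decomposition of the feature map over $f \in \F_{WL}$, the fact that $k^\A_f(H,H)=1$ forces the base-kernel feature vectors to have unit norm (so the scaled vectors encode the histogram masses), the metric property of $\W_{d^1}$ on normalized histograms, and Theorem~\ref{thm:power} to pass between histogram equality and isomorphism. The only difference is cosmetic — you argue the contrapositive (feature-map equality forces $\phi^\A_f(G)=\phi^\A_f(G')$ for all $f$) where the paper splits into the cases of equal versus unequal masses, but the underlying steps are identical.
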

    
    \begin{proof}
    We will use the edge weight functions and filtrations from the proof of Thm.~\ref{thm:power} and it remains to be shown that in this case 
    \[ K^{\F_{WL},\A}_{\text{Filt}} = \sum_{f \in \F_{WL}} k^\A_f(G,G') ~ \norm{\phi^\A_f(G)}_1 \norm{\phi^\A_f(G')}_1 \]
    is complete \citep{DBLP:conf/colt/GartnerFW03}, i.e., if its feature map $\varphi^{\F_{WL},\A}_{\text{Filt}}$ satisfies $\varphi^{\F_{WL},\A}_{\text{Filt}}(G)=\varphi^{\F_{WL},\A}_{\text{Filt}}(G') \Leftrightarrow G,G'$ are isomorphic.
    
    If $G$ and $G'$ are isomorphic then it trivially follows that $\varphi^{\F_{WL},\A}_{\text{Filt}}(G)=\varphi^{\F_{WL},\A}_{\text{Filt}}(G')$.
    Now suppose that $G$ and $G'$ are not isomorphic. 
    Then by Theorem~\ref{thm:power} there exists a WL feature $f' \in \F_{WL}$ such that $\phi^\A_f(G) \neq \phi^\A_f(G')$, as $G$ and $G'$ are not isomorphic.
    Now we note that the feature space of $K^{\F_{WL},\A}_{\text{Filt}}$ is the concatenation of the feature spaces of the kernels $\kappa_{f'}(G, G') = k^\A_{f'}(G,G') ~ \norm{\phi^\A_{f'}(G)}_1 \norm{\phi^\A_{f'}(G')}_1$ for all $f' \in \F_{WL}$.
    
    We will now show that the feature space embeddings of $G$ and $G'$ corresponding to $\kappa_f(\cdot, \cdot)$ do differ, which implies that the feature space embeddings of $G$ and $G'$ corresponding to $K^{\F_{WL},\A}_{\text{Filt}}$ differ, hence showing that $K^{\F_{WL},\A}_{\text{Filt}}$ is a complete graph kernel.
    Note that
    \begin{eqnarray*}
    \kappa_f(G, G') 
    & = & k^\A_f(G,G') ~ \norm{\phi^\A_f(G)}_1 \norm{\phi^\A_f(G')}_1 \\
    & = & \scalprod{\varphi^\A_f(G)}{\varphi^\A_f(G')} \norm{\phi^\A_f(G)}_1 \norm{\phi^\A_f(G')}_1 \\
    & = & \scalprod{\norm{\phi^\A_f(G)}_1 \varphi^\A_f(G)}{\norm{\phi^\A_f(G')}_1 \varphi^\A_f(G')} \ , \\
    \end{eqnarray*}
    where $\varphi^\A_f(H)$ is the feature vector of $H$ corresponding to the kernel $k^\A_f(\cdot, \cdot)$.
    Also recall, that $k^\A_f(G,G') = e^{-\gamma \W_{d^1}(\hat{\phi}^\A_f(G),\hat{\phi}^\A_f(G'))}$.
    We will distinguish two cases:
    
    \paragraph{Case 1} $\norm{\phi^\A_f(G)}_1 = \norm{\phi^\A_f(G')}_1$: \\
    Following from Thm.~\ref{thm:power} we still have $\hat{\phi}^\A_f(G) \neq \hat{\phi}^\A_f(G')$ for the normalized filtration histograms, as $G$ and $G'$ are not isomorphic.
    Thus $\W_d(\hat{\phi}^\A_f(G),\hat{\phi}^\A_f(G')) > 0$ and $\W_d(\hat{\phi}^\A_f(G),\hat{\phi}^\A_f(G)) = \W_d(\hat{\phi}^\A_f(G'),\hat{\phi}^\A_f(G')) = 0$, as $\W_d$ is a metric on normalized filtration histograms.
    Hence, the feature representations $\varphi^\A_f(G)$ and $\varphi^\A_f(G')$ cannot be identical, as $k_f^\A$ is a function and
    \[ k_f^\A(G, G) = k_f^\A(G', G') = 1 > k_f^\A(G, G') \ . \]
    
    \paragraph{Case 2} $\norm{\phi^\A_f(G)}_1 \neq \norm{\phi^\A_f(G')}_1$: \\
    To this end, we note that the feature representation $\varphi^\A_f(H)$ has unit lenght for all graphs $H$:
    \begin{eqnarray*}
    \norm{\varphi^\A_f(H)}^2_2 & = & \scalprod{\varphi^\A_f(H)}{\varphi^\A_f(H)} \\
    & = & k^\A_f(H,H) \\
    & = & e^{-\gamma \W_{d^1}(\hat{\phi}^\A_f(H),\hat{\phi}^\A_f(H))} \\
    & = & e^{-\gamma 0} = 1  \\
    \end{eqnarray*}
    by definition and due to the fact that $\W_{d^1}(\cdot, \cdot)$ is a metric.
    This implies $\norm{\varphi^\A_f(H)}_2 = 1$.
    Therefore,
    \begin{eqnarray*}
    \norm{\norm{\phi^\A_f(G)}_1 \varphi^\A_f(G)}_2
    & = & \norm{\phi^\A_f(G)}_1 \norm{ \varphi^\A_f(G)}_2 \\
    & = & \norm{\phi^\A_f(G)}_1 \\
    & \neq & \norm{\phi^\A_f(G')}_1 \\
    & = & \norm{\phi^\A_f(G')}_1 \norm{ \varphi^\A_f(G')}_2 \\
    & = & \norm{\norm{\phi^\A_f(G')}_1 \varphi^\A_f(G')}_2 \ . \\
    \end{eqnarray*}
    As two vectors with different lenghts cannot be identical, this concludes our proof.
    \end{proof}
    
    \newtheorem*{repeatedcorrgnns}{Corollary~\ref{corr:power:gnns}}
    \begin{repeatedcorrgnns}
    There exists a filtration function $\A$ and GNN \citep{DBLP:conf/iclr/XuHLJ19} $\mathcal{N}$ such that $\mathcal{N}$ can distinguish any two non-isomorphic graphs when provided with the filtration graphs corresponding to $\A$. 
    \end{repeatedcorrgnns}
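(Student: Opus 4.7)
The plan is to reduce the statement to Theorem~\ref{thm:power} via the known correspondence between 1-WL and GIN-style graph neural networks \citep{DBLP:conf/iclr/XuHLJ19}. Concretely, Theorem~\ref{thm:power} already yields a filtration $\A$ such that the multisets of WL labels over the filtration graphs uniquely identify $G$ up to isomorphism; all that remains is to realize a GNN whose embedding injectively encodes these multisets on each filtration level.

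First, I would fix $\A$ to be the canonical-form–based filtration constructed in the proof of Theorem~\ref{thm:power}, producing sequences $G_1 \subseteq \dots \subseteq G_k$ and $G'_1 \subseteq \dots \subseteq G'_k$ of identical length for any two input graphs (after aligning the common set of cutoff values $\{\alpha_1,\dots,\alpha_k\}$, which depend only on $\A$, not on the individual graph). Next, I would invoke the well-known result of \citet{DBLP:conf/iclr/XuHLJ19} (in particular Theorem~3 and Corollary~6 there) stating that there exists a GNN architecture $\N'$ built from injective neighbor aggregation and readout (e.g.\ GIN with a sufficiently expressive MLP and a sum readout) whose graph-level embedding is injective on the multiset of $h$-iteration WL labels: whenever two graphs $H,H'$ disagree on that multiset, $\N'(H) \neq \N'(H')$.

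I would then define the desired architecture $\N$ as the ensemble that applies $\N'$ independently to each filtration graph and concatenates the resulting embeddings,
\[
\N(G) \;=\; \bigl(\N'(G_1),\,\N'(G_2),\,\ldots,\,\N'(G_k)\bigr).
\]
If $G$ and $G'$ are isomorphic, then so are $G_i$ and $G'_i$ for every $i$, so the two ensemble embeddings coincide. Conversely, if $G$ and $G'$ are non-isomorphic, Theorem~\ref{thm:power} supplies a feature $f \in \F_{WL}$ with $\phi^\A_f(G) \neq \phi^\A_f(G')$, hence an index $i^\ast$ such that the iteration-$h$ WL label multisets of $G_{i^\ast}$ and $G'_{i^\ast}$ differ; injectivity of $\N'$ then gives $\N'(G_{i^\ast}) \neq \N'(G'_{i^\ast})$ and therefore $\N(G) \neq \N(G')$.

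The main subtlety I expect is formally presenting $\N$ as a single GNN consuming the \emph{sequence} of filtration graphs rather than a single graph, which is not fully standard. A clean way to handle this is to view the filtration as one layered input graph on the vertex set $\{1,\dots,k\} \times V(G)$ with a level-indicator appended to each vertex feature and no edges across levels; a single GIN over this enlarged graph then realizes the concatenation above, and the injectivity argument of \citet{DBLP:conf/iclr/XuHLJ19} transfers verbatim. Once this technicality is fixed, the reduction to Theorem~\ref{thm:power} is immediate and the rest is mechanical.
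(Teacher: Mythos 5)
Your proposal is correct and follows essentially the same route as the paper: fix the canonical-form filtration from Theorem~\ref{thm:power}, apply an injective (GIN-style) network $\N'$ to each filtration graph via Theorem~3 of \citet{DBLP:conf/iclr/XuHLJ19}, and combine the per-level embeddings so that a single distinguishing level $G_{i^\ast}, G'_{i^\ast}$ forces $\N(G)\neq\N(G')$. The only cosmetic difference is that you concatenate the level embeddings (which preserves the level index) whereas the paper applies one more injective readout over the collection of graph-level outputs; both close the argument in the same way.
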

    
    \begin{proof}
    This result directly follows from Theorem~3 of \citet{DBLP:conf/iclr/XuHLJ19}.
    Let $\mathcal{N}'$ be a GNN with injective aggregator and update functions, as well as an injective graph level readout function as stated by \citet{DBLP:conf/iclr/XuHLJ19}.
    We construct $\mathcal{N}$ by passing all filtration graphs given by filtration $\A$ of any graph $G$ through $\mathcal{N}'$ and adding another injective readout layer over the set of the resulting graph level readouts of the filtration graphs.
    By our Theorem~\ref{thm:power}, for two non-isomorphic graphs $G, G'$ there exist filtration graphs $G_x, G'_x$ that can be distinguished by their WL labels.
    Hence, following from \citet{DBLP:conf/iclr/XuHLJ19}, $\mathcal{N}'$ maps $G_x$ and $G'_x$ to different embeddings.
    Hence, an injective readout layer over these embeddings must map $G$ and $G'$ to different embeddings.
    \end{proof}
    
\section{(E) ~~~ Experimental Details}
\label{apx:datasets}
    \paragraph{Datasets}
        We conduct experiments on the well-established datasets DHFR, NCI1 and PTC-MR \citep[obtained from][]{DBLP:journals/corr/abs-2007-08663}) as representatives for the multitude of existing molecular benchmark datasets.  
   		We, furthermore, report results on several large network benchmark datasets.
   		IMDB-BINARY \citep[obtained from][]{DBLP:journals/corr/abs-2007-08663} consists of collaboration networks between actors, annotated against movie genres. 
   		The EGO datasets are novel real-world benchmark benchmarks extracted from the social networks Buzznet, Digg, Flickr and LiveJournal. Each of the EGO datasets consists of $50$ random ego network graphs from each of the $4$ social networks where graphs are annotated against the social network they were extracted from. The learning task is hence to assign each ego network to the network they were extracted from.
   		Here, ego networks are subgraphs induced by a central vertex's neighbors. The central vertex was subsequently removed. Graphs within each dataset were randomly chosen from the set of all ego networks but underlie size- and density-specific constraints to ensure that a simple count of nodes and edges is not sufficient for prediction tasks. The EGO-$x$ datasets contain increasingly larger and more dense ego networks with growing index $x$. 
   		
   		We selected only such datasets on which the baseline \BL kernel was outperformed by more sophisticated graph kernels.
        Detailed information about the structural properties of the benchmark dataset graphs are listed in Table \ref{tab:datasets}. 
        
        \begin{table}
           	\begin{center}
           		\resizebox{0.5\textwidth}{!}{  
            		\begin{tabular}{cccccc}
            			\hline
            			Dataset & \#graphs & \#classes & $\varnothing |V|$ & $\varnothing |E|$ & \#labels \\ 
            			\hline
            			DHFR & $756$ & $2$ & $42.4$ & $44.6$ & $9$ \\ 
            			NCI1 & $4110$ & $2$ & $29.9$ & $32.3$ & $37$ \\ 
            			PTC-MR & $344$ & $2$ & $14.3$ & $14.7$ & $18$ \\ 
            			IMDB-BINARY & $1000$ & $2$ & $19.8$ & $96.5$ & $1$ \\ 
            			EGO-1 & $200$ & $4$ & $139.0$ & $593.53$ & $1$ \\ 
            			EGO-2 & $200$ & $4$ & $178.6$ & $1444.9$ & $1$ \\ 
            			EGO-3 & $200$ & $4$ & $220.0$ & $2613.5$ & $1$ \\ 
            			EGO-4 & $200$ & $4$ & $259.8$ & $4135.8$ & $1$ \\ 
            		\end{tabular}
           		}%
           	\end{center}
           	\caption{Structural information of benchmark dataset graphs.}
           	\label{tab:datasets}
        \end{table}

 \paragraph{Experimental Setup}
     For all applied methods, we used the implementation as provided by \citep{JMLR:grakel} or that of the respective authors and ran grid searches using the following kernel specific parameters: 
   		For approaches employing the Weisfeiler-Lehman subtree features \WL, \WLOA, \WWL, \PWL as well as our kernel, we chose $h \in \{ 1,\ldots,5 \}$. 
   		In case of the graphlet sampling ($\GS$) kernel the parameters $\epsilon=0.1$, $\delta=0.1$ and $k \in \{3,4,5\}$ were applied.
   		If applicable to the individual kernel, we chose the RBF parameter $\gamma \in \{ 2^i: i \in \{ -12,-8,-5,-3,-1,0,1,3,5,8,12 \} \}$.
   		The SVM paratmeter $C$ was selected from $C \in \{ 2^i: i \in \{ -12,-8,-5,-3,-1,0,1,3,5,8,12 \} \}$.
   		In case of our filtration kernels, we set $k \in \{1,...,10\}$.

   	\paragraph{Runtime Analysis}
   		Fig. \ref{fig:runtime_k} provides the runtimes to compute the kernel matrix of our \FPWLdeg variant kernel. 
   		More specifically, we measured the runtime after the data has been loaded into memory. 
   		The datasets DHFR, NCI1, and PTC-MR contain only graphs with maximum node degree $3$, thus, limiting the range of the value $k$ to $3$ as well.
   		We note that these results were obtained using an unoptimized version of our kernel and are hence not necessarily representative for its potential real-world performance.
   		However, it is evident from the results that the runtime linearly increases with growing filtration length values $k$. 
   		This is consistent with our theoretical results on the runtime complexity of the kernel.
   		To put the above numbers into context, Tab. \ref{tab:runtime} provides the runtime measures for all tested competitor kernels. 
   		The \GS kernel was run using parameter $k=5$.
   		For all methods based on the Weisfeiler-Lehman process, we picked depth parameters $h=5$. 
   		
   		\begin{table}
   			\begin{center}
   				\begin{tabular}{ccccc}
   					\hline 
   					& DHFR & NCI1 & PTC-MR & IMDB-B. \\ 
   					\hline 
   					\GS 	&	77.17	&	434.32	&	38.64	&	135.89	\\
   					\SP 	&  	4.34	&	12.79	&	0.30	&	2.39	\\
   					\WL 	&	0.41	&	2.16	&	0.08	&	0.51	\\
   					\WLOA 	&  	12.39	&	1624.24	&	1.65	&	17.53	\\
   					\WWL 	&  	290.32	& 6457.34	& 32.49     &	297.06  \\
   					\PWL 	&  	4.27    &	27.78   &	0.68    &	12.55   \\
   					\CoreWL &  	1.28	&	7.92	&	0.21	&	6.65	\\
   					\hline
   					&	EGO-1 & EGO-2 & EGO-3 & EGO-4 \\
   					\hline
   					\GS 	&	21.62	&	23.38	&	24.44	&	26.58	\\
   					\SP 	&	15.96	&	33.27	&	60.01	&	97.18	\\
   					\WL 	&	0.50	&	0.92	&	1.47	&	2.16	\\
   					\WLOA 	&	4.16	&	6.14	&	7.93	&	9.85	\\
   					\WWL 	& 103.46    &	173.11  &	260.58  &	353.57	\\
   					\PWL 	&	18.48   &	55.65   &	119.40  &	212.05	\\
   					\CoreWL	&	5.08	&	17.18	&	40.26	&	78.98	\\
   				\end{tabular}
   			\end{center}
   			\caption{Runtime measures of competitor kernels (in seconds).}
   			\label{tab:runtime}
   		\end{table}
   	
   		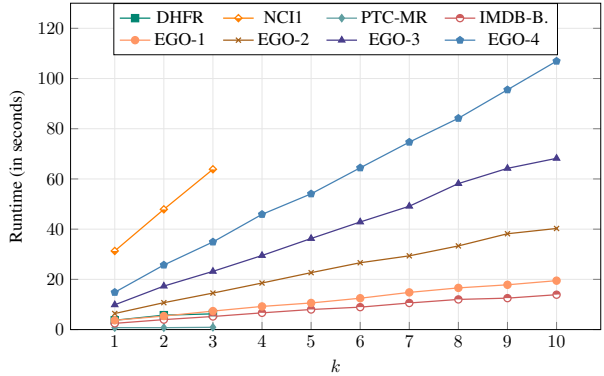
\begin{figure}
   			\centering
  				\resizebox{0.45\textwidth}{!}{\begin{tikzpicture}
\begin{axis}[
	width=12cm,
	height=8cm,
	xtick={1,...,10},
	ymin=0, ymax=130,
	grid=major,
	grid style={line width=.5pt, draw=gray!20},
    legend style={at={(0.5,1)}, anchor=north, legend columns=4, /tikz/every even column/.append style={column sep=0.25cm}},
	ylabel={Runtime (in seconds)},
	ylabel near ticks,
	xlabel={$k$},
]

\addplot[color=baselinecolor, line width=0.25mm, mark=square*] 
table [y error=error] {
	x   y		error    
	1	3.7135956287384
	2	5.74284839630127
	3	6.26330089569092
};

\addplot[color=CoreWLcolor, line width=0.25mm, mark=halfsquare*] 
table [y error=error] {
	x   y		error    
	1	31.3129968643188
	2	47.8910040855408
	3	63.8491287231445
};

\addplot[color=SPcolor, line width=0.25mm, mark=diamond*] 
table [y error=error] {
	x   y		error    
	1	0.781185865402222
	2	0.755894660949707
	3	0.911377668380737
};

\addplot[color=WLcolor, line width=0.25mm, mark=halfcircle*] 
table [y error=error] {
	x   y		error    
	1	2.4805588722229
	2	3.96355509757996
	3	5.2160177230835
	4	6.66282796859741
	5	7.99582171440125
	6	8.9306800365448
	7	10.5883951187134
	8	12.0089676380157
	9	12.5213325023651
	10	13.8884162902832
};

\addplot[color=WLOAcolor, line width=0.25mm, mark=*] 
table [y error=error] {
	x   y		error    
	1	3.70302820205688
	2	5.38424134254456
	3	7.36605620384216
	4	9.21760630607605
	5	10.6025629043579
	6	12.4896759986877
	7	14.8089907169342
	8	16.6008396148682
	9	17.8341019153595
	10	19.4845380783081
};

\addplot[color=WWLcolor, line width=0.25mm, mark=x] 
table [y error=error] {
	x   y		error    
	1	6.39702296257019
	2	10.7157509326935
	3	14.5423889160156
	4	18.549079656601
	5	22.6750347614288
	6	26.6182923316956
	7	29.3715844154358
	8	33.3245456218719
	9	38.1719641685486
	10	40.2717454433441
};

\addplot[color=FPWLdegcolor, line width=0.25mm, mark=triangle*] 
table [y error=error] {
	x   y		error    
	1	9.86773920059204
	2	17.3160629272461
	3	23.1892046928406
	4	29.5076622962952
	5	36.2762224674225
	6	42.8592729568481
	7	49.1328547000885
	8	58.1565685272217
	9	64.2216606140137
	10	68.2102129459381
};

\addplot[color=FPWLrwcolor, line width=0.25mm, mark=pentagon*] 
table [y error=error] {
	x   y		error    
	1	14.8502621650696
	2	25.7161281108856
	3	34.9056570529938
	4	45.8966212272644
	5	54.0745258331299
	6	64.4354698657989
	7	74.6624441146851
	8	84.1715521812439
	9	95.5188806056976
	10	106.927093029022
};

\legend{DHFR, NCI1, PTC-MR, IMDB-B., EGO-1, EGO-2, EGO-3, EGO-4}
\end{axis}
\end{tikzpicture}}
  				\captionof{figure}{Runtime measures of the \FPWLdeg variant for different filtration length values $k$.}
  				\label{fig:runtime_k}
  			\end{figure}

   	\paragraph{Computing Power}
  		 All experiments were performed on an AMD $3900$X processor (12 cores) with $64$GB of memory. 
  		 We implemented our filtration kernel in Python $3.7$. 

\end{document}